\documentclass{article}

    \PassOptionsToPackage{numbers, compress}{natbib}

\usepackage[main,final]{neurips_2026}
\usepackage{amsmath}
\usepackage{amssymb}
\usepackage{mathtools}
\usepackage{amsthm}

\usepackage{microtype}
\usepackage{graphicx}
\usepackage{subcaption}
\usepackage{booktabs} 
\usepackage{multirow}
\usepackage{colortbl}

\usepackage{algorithm}
\usepackage{algorithmic}

\usepackage[utf8]{inputenc} 
\usepackage[T1]{fontenc}    
\usepackage{hyperref}       
\usepackage{url}            
\usepackage{amsfonts}       
\usepackage{nicefrac}       
\usepackage{xcolor}         

\theoremstyle{plain}
\newtheorem{theorem}{Theorem}[section]
\newtheorem{proposition}[theorem]{Proposition}
\newtheorem{lemma}[theorem]{Lemma}
\newtheorem{corollary}[theorem]{Corollary}
\theoremstyle{definition}

\theoremstyle{remark}
\newtheorem{remark}[theorem]{Remark}

\title{On the Sparsity-Storage-Accuracy Tradeoff in Parsimoniously Activated Dictionary Learning}

\author{
Zihui Zhao$^{1}$ \quad
Yuanbo Tang$^{1}$ \quad
Yang Li$^{1}$\thanks{Thanks to National Natural Science Foundation of China (62371270).}
\\
Tsinghua University, Institute of Data and Information\\
Shenzhen Key Laboratory of Ubiquitous Data Enabling
}
\begin{document}

\maketitle

\begin{abstract}

Dictionary learning has long been studied from both optimization and probabilistic perspectives. While formulations with element-wise sparsity regularization (e.g., L1-based sparse coding) admit well-established probabilistic interpretations, many structured variants that impose global constraints lack a clear and tractable generative view. In this paper, we revisit a class of practically effective yet theoretically under-explored dictionary learning methods that impose a simple global regularization on the number of activated dictionary atoms, which we term parsimoniously activated dictionary learning (PADL). We show that PADL admits an equivalent formulation as maximum a posteriori estimation under a structured generative model, with auxiliary latent variables that govern global activation patterns. This formulation allows us to derive generalization guarantees that are difficult to obtain under the original formulation. More importantly, it yields an analytical characterization of the tradeoff between sparsity, storage cost, and reconstruction accuracy, enabling data-driven estimation of optimal hyperparameters. Based on this connection, we develop an efficient and interpretable PADL algorithm that eliminates manual hyperparameter tuning, achieving improved reconstruction performance under comparable sparsity levels on visual benchmarks. We further demonstrate its practical utility in accelerating inference for vision-language models.

\end{abstract}

\section{Introduction}

Dictionary learning has long been a fundamental tool in signal processing and machine learning, aiming to find sparse representations of data through an optimization framework~\cite{olshausen1997sparse}. From the optimization perspective, classical sparse coding-based methods, such as K-SVD, formulate dictionary and code estimation as regularized reconstruction problems, yielding scalable algorithms with direct control over reconstruction error and coefficient sparsity~\cite{olshausen1997sparse,aharon2006k,mairal2009online}. From the probabilistic perspective, dictionary learning can be described by generative models over observations and latent coefficients, providing generalization analysis and uncertainty estimates~\cite{paisley2009nonparametric,zhou2009non,dang2017indian,campbell2019sparse,park2025save}. These two views are closely connected for element-wise sparsity constraints. For example, maximum a posteriori (MAP) estimation with Laplace coefficient priors yields the classic $\ell_1$ sparse-coding formulation~\cite{olshausen1997sparse,elad2010sparse}, which makes classical sparse dictionary learning both computationally practical and statistically interpretable.

However, this optimization--probabilistic bridge is much less developed for structured dictionary-learning objectives that impose global constraints on atom usage. These global constraints are particularly important~\cite{dumitrescu2019adaptive}, as they naturally introduce a form of model selection, leading to improved robustness and statistical efficiency. Group-sparse methods encourage block-level selection. For example, simultaneous sparse approximation~\cite{tropp2006algorithms} enforces that multiple sparse signals share the same support; activation-constrained formulations~\cite{krause2010submodular,tang2023explainable} impose constraints on the number of activated atoms; and recent dictionary-learning variants are increasingly used for scalable representation learning~\cite{tropp2006algorithms,krause2010submodular,tang2023explainable}. Such global constraints inherently call for hierarchical modeling. While hierarchical probabilistic approaches to dictionary learning, such as Beta--Bernoulli~\cite{paisley2009nonparametric,zhou2009non} and Indian Buffet Process models~\cite{dang2017indian}, have been studied, they are generally not directly connected to the optimization formulations used in practice.  

Motivated by this gap, we revisit a class of practically effective yet theoretically under-explored dictionary learning methods that impose a simple global regularization on the number of activated dictionary atoms~\cite{krause2010submodular,tang2023explainable}, which we term parsimoniously activated dictionary learning (PADL). Its objective can be written in the unified form as:
\begin{equation}
\label{eq:objective}
\min_{D,R} \|X - D R\|_F^2
+ \lambda_1 \|R\|_1
+ \lambda_2 \sum_{k=1}^K \|r_k\|_\infty ,
\end{equation}
where $X$ is the data matrix, $D$ is the dictionary, $R$ is the coefficient matrix, and $r_k$ is the $k$-th coefficient row. The $\|\cdot\|_F^2$ term controls the reconstruction error and the $\|\cdot\|_1$ term controls sample-level representation sparsity. The row-wise $\ell_\infty$ term constrains global atom activation by penalizing whether a dictionary atom is used by any sample. This formulation is simple, easy to optimize with alternating optimization, and highly effective in fields including trajectory generation and DNA compression~\cite{tangdata,tang2023explainable,adjeroh2002dna}. However, it requires hyperparameter grid search to select optimal $\lambda_1$ and $\lambda_2$, which hinders practical stability and efficiency. Our goal is to provide a generative probabilistic view of this formulation that enables principled optimal hyperparameter computation by characterizing the sparsity--storage--accuracy tradeoff.

We derive this PADL formulation from a structured auxiliary-latent $z$ formulation through MAP estimation with $P(X,R,z;D)$. The coefficient prior $p(R)$ induces per-sample sparsity, while the latent variable $P(z \mid r)$ models whether an atom is activated by the dataset.

This dual formulation retains the computational simplicity of the optimization view while enabling several analytical results from its generative view. In particular, we derive a sufficient condition under which non-essential atoms are suppressed and establish a high-probability reconstruction-error bound. These results make explicit how the activation threshold and regularization weights govern the tradeoff among sparsity, storage, and reconstruction accuracy.

Based on this formulation, we develop an efficient PADL algorithm with analytical data-driven hyperparameter estimation, reducing reliance on exhaustive hyperparameter search. With a smaller activated dictionary size, our method achieves better reconstruction quality and significantly reduces the time required for parameter search. We evaluate the framework on both classical image reconstruction benchmarks and Vision--Language Model (VLM) inference acceleration via token compression. On image reconstruction tasks, our method achieves superior reconstruction performance and empirically aligns with the theoretical predictions on optimal hyperparameter selection and dictionary activation thresholds. On VLM inference acceleration tasks, our method consistently attains higher token compression ratios while preserving downstream task performance.

Our contributions can be summarized as follows:
\begin{itemize}
    \item We revisit PADL, a storage-aware dictionary-learning framework that regularizes cross-sample atom activation and directly targets the sparsity--storage--accuracy tradeoff.
    \item We provide a structured MAP-induced interpretation of PADL, linking the row-wise activation penalty to an atom-level auxiliary latent activation model.
    \item We derive suppression conditions, expected activated dictionary size, reconstruction-error bounds, and a data-driven activation-threshold rule.
    \item We validate the framework on controlled image reconstruction benchmarks and report image and video VLM compression results as application-level validation.
\end{itemize}

\section{Method}

\subsection{Problem Formulation}
Given a data set $X=\{X_i\}_{i=1}^N$, where each $X_i \in \mathbb{R}^d$ corresponds to the feature vector of a sample, we introduce a dictionary $D=[d_1,\ldots,d_K]\in\mathbb{R}^{d\times K}$. Each sample is generated as
\[
X_i = D R_i + \epsilon_i,
\]
where $R_i \in [0,1]^{K \times 1}$ and $\epsilon_i$ is isotropic Gaussian noise, yielding the likelihood
\[
p(X_i \mid R_i; D)\propto
\exp\!\left(-\tfrac{1}{2}\|X_i - D R_i\|_2^2\right).
\]
Specifically, we assume each coefficient follows an independent Beta distribution,
\begin{equation}
R_{k i} \sim \mathrm{Beta}(1,\beta), \qquad \beta > 1,
\end{equation}
where $1\le k\le K$ with density
\begin{equation}
p(R_{k i}) = \beta (1 - R_{k i})^{\beta-1}, \qquad R_{k i} \in [0,1].
\end{equation}
This prior concentrates mass near zero and naturally favors sparse activations. To model the storage cost, we introduce a latent activation variable $z_k \in \{0,1\}$ for each dictionary atom $d_k$, indicating whether the atom is effectively used across the samples. Let $r_k \in \mathbb{R}^N$ denote the $k$-th row of $R$, i.e., the coefficients of atom $k$ across all samples.
Conditioned on $r_k$, we define
\begin{equation}
z_k \mid r_k \sim \operatorname{Bernoulli}
\left(
\frac{1}{1 + \exp\!\big( -\gamma (\max_i R_{k i} - \delta) \big)}
\right),
\label{eq:bernoulli_def}
\end{equation}
where $\delta \in [0,1]$ denotes an activation threshold and $\gamma > 0$ controls the sharpness of the decision. The quantity $\max_i R_{k i}$ summarizes whether atom $k$ contributes significantly to at least one sample. An atom $d_k$ is considered active only if $r_k$ attains a sufficiently large coefficient for some sample (above the threshold). The threshold $\delta$ therefore acts as a direct control on activation, while the Beta prior governs per-sample sparsity. We can therefore define a joint distribution
\begin{equation}
\label{eq:joint_distribution}
\begin{aligned}
P(X,R,z;D)
=&\;\prod_{i=1}^N P(X_i\mid R_i;D)
\cdot \prod_{k=1}^K\prod_{i=1}^N p(R_{ki})
\cdot \prod_{k=1}^K P(z_k\mid r_k).
\end{aligned}
\end{equation}

\subsection{MAP Objective and Algorithm}

Under the probabilistic model introduced above, we apply maximum a posteriori (MAP) to learn model parameters given data set $X$. Taking the negative log-posterior yields an energy of the form
\begin{equation}
\begin{aligned}
-\log P(R,z \mid X; D) 
=
\sum_{i=1}^N \frac{1}{2}\|X_i - D R_i\|_2^2
-
\sum_{k,i} \log p(R_{k i})
- 
\sum_k \log P(z_k=0\mid r_k)
+ C,
\end{aligned}
\end{equation}
where $C$ is a constant independent of $(R,z)$.
\begin{proposition}[MAP-Induced Objective]
\label{prop:map_objective}
The MAP estimation of joint distribution $P(X,R,z;D) $ as described in Eq.~\ref{eq:joint_distribution} can be formulated as the following optimization problem:
\begin{equation}
\label{eq:novel-obj-main}
\min_{D\in R^{d\times K},R\in[0,1]^{K\times N}}
\;\;
\|X - D R\|_F^2
+
\lambda_1 \|R\|_1
+
\lambda_2
\sum_{k=1}^K
\phi_\gamma\!\big(\max_i R_{k i} - \delta\big),
\end{equation}
where $\phi_\gamma(u)=\log(1+e^{\gamma u})$ , with $\lambda_1=\beta-1$ and $\lambda_2=1$. For large $\gamma$, $\phi_\gamma(u)$ approaches a hard threshold barrier on $\max_iR_{ki}\le\delta$. Then Eq.~\ref{eq:novel-obj-main} is equivalent to:
\begin{equation}
\label{eq:novel-obj-hard}
\min_{D\in R^{d\times K},R\in[0,1]^{K\times N}}
\;\;
\|X - D R\|_F^2
+
\lambda_1 \|R\|_1
+
\lambda_2\sum_{k=1}^K \|r_k\|_\infty ,
\end{equation}
where $\|r_k\|_\infty=\max_i |R_{k i}|$. 
\end{proposition}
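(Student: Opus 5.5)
The plan is to compute the negative log of the joint density in Eq.~\ref{eq:joint_distribution} one factor at a time, discard the terms that do not depend on $(D,R)$, and then rescale. The argument has three steps: the Gaussian and Beta factors, the activation factor, and the large-$\gamma$ limit.

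\emph{Gaussian and Beta factors.} The Gaussian likelihood contributes $\tfrac12\sum_i\|X_i-DR_i\|_2^2=\tfrac12\|X-DR\|_F^2$. The Beta$(1,\beta)$ prior contributes
\[
-\sum_{k,i}\log\bigl(\beta(1-R_{ki})^{\beta-1}\bigr)=-KN\log\beta-(\beta-1)\sum_{k,i}\log(1-R_{ki}).
\]
Since $-\log(1-u)=u+u^2/2+\dots\ge u$ on $[0,1)$, and $R\ge 0$ gives $\sum_{k,i}R_{ki}=\|R\|_1$, the prior term equals, to first order near zero (the sparse regime the prior favours), $(\beta-1)\|R\|_1$ plus a constant. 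This is where $\lambda_1=\beta-1$ comes from. I would state explicitly that this step is a linearization, equivalently a majorizer surrogate: the exact term is a concave-in-$\log$ penalty that agrees with $\ell_1$ to first order at $0$.

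\emph{Activation factor.} I would evaluate the MAP at the inactive configuration $z_k=0$, as in the displayed energy. This is the configuration the storage objective prefers, and it is the one the maximization over $z$ selects after accounting for the storage cost. Writing $u_k=\max_iR_{ki}-\delta$ and $\sigma(t)=1/(1+e^{-t})$,
\[
-\log P(z_k=0\mid r_k)=-\log\bigl(1-\sigma(\gamma u_k)\bigr)=\log\bigl(1+e^{\gamma u_k}\bigr)=\phi_\gamma(u_k).
\]
This gives the third term with unit weight. Summing the three pieces, a common rescaling of the Gaussian weight (absorbing the factor $\tfrac12$, or equivalently choosing the noise variance appropriately) yields Eq.~\ref{eq:novel-obj-main} with $\lambda_1=\beta-1$ and $\lambda_2=1$ up to that normalization.

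\emph{Large-$\gamma$ limit.} Pointwise, $\phi_\gamma(u)\to 0$ for $u<0$ and $\phi_\gamma(u)\to\infty$ for $u>0$. More precisely, $\gamma^{-1}\phi_\gamma(u)\to\max(u,0)$ uniformly, since $0\le\phi_\gamma(u)-\gamma\max(u,0)\le\log 2$. So the penalty becomes either the barrier $\max_iR_{ki}\le\delta$ or, after rescaling $\lambda_2$ by $1/\gamma$, the hinge $\lambda_2\sum_k(\|r_k\|_\infty-\delta)_+$. Because $R\ge0$, $\max_iR_{ki}=\|r_k\|_\infty$. To reach Eq.~\ref{eq:novel-obj-hard} I would use a Lagrangian argument: the constrained problem with $\|r_k\|_\infty\le\delta$, or its hinge form, has the same minimizers as the penalized form with multiplier $\lambda_2$ for an appropriate correspondence between $\delta$ and $\lambda_2$. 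The constant offset $-\lambda_2K\delta$ does not affect minimizers. I would then invoke uniform convergence of the objectives on the compact set $R\in[0,1]^{K\times N}$, with $D$ bounded, to get $\Gamma$-convergence of the minimizers.

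The main obstacle is that the final "equivalence" is not literal. The $\ell_1$ term is only a first-order surrogate for $-\log(1-R)$, and the hinge differs from $\|r_k\|_\infty$ by both the threshold shift and the Lagrangian correspondence. I would therefore frame the result as equivalence of minimizers up to reparametrization of $(\lambda_2,\delta)$ and linearization of the prior. The hard part is making this correspondence rigorous; I would handle it via convex duality in $R$ for fixed $D$, where the problem is convex after linearization.
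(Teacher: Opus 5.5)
Your first two steps are the paper's own argument (Lemmas~\ref{lem:beta_l1_app} and~\ref{lem:act_linf_app}): linearize $\log(1-R_{ki})$ to get $(\beta-1)\|R\|_1$, evaluate $-\log P(z_k=0\mid r_k)=\phi_\gamma(\max_iR_{ki}-\delta)$, and take the pointwise barrier limit. Your remark on the factor $\tfrac12$ is apt. With the stated likelihood, $\lambda_1=\beta-1$ and $\lambda_2=1$ require $\sigma^2=\tfrac12$, whereas a common rescaling gives $2(\beta-1)$ and $2$. The paper does not prove the final ``equivalence'' either; it ends by declaring $\sum_k\|r_k\|_\infty$ a convex surrogate for the peak cost.

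The gap is in your Step~3, which tries to upgrade this to equivalence of minimizers up to reparametrizing $(\lambda_2,\delta)$. That step fails for two reasons.

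\begin{itemize}
\item The capped problem $\|r_k\|_\infty\le\delta$ for all $k$ has $K$ constraints. For fixed $D$, the KKT conditions give row-specific multipliers $\mu_k\ge0$ with $\mu_k(\|r_k^\star\|_\infty-\delta)=0$. So you land on $\sum_k\mu_k\|r_k\|_\infty$ with $\mu_k=0$ on every row strictly below the cap, not on $\lambda_2\sum_k\|r_k\|_\infty$ with a single $\lambda_2$.
\item The hinge does not rescue this. $(\|r_k\|_\infty-\delta)_+$ equals $\|r_k\|_\infty-\delta$ only for rows at or above threshold, and it is identically zero near $r_k=0$. By contrast, the subdifferential of $\|\cdot\|_\infty$ at $0$ is the whole $\ell_1$ unit ball, which is exactly the mechanism that zeroes entire rows (Proposition~\ref{prop:suppress}). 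Your ``constant offset $-\lambda_2K\delta$'' is therefore valid only when no row is suppressed.
\end{itemize}

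What is true is one-directional and solution-dependent. A minimizer $R^\star$ of Eq.~\ref{eq:novel-obj-hard} (for fixed $D$) minimizes the $\ell_1$-regularized fit under the row-specific caps $\|r_k\|_\infty\le\|r_k^\star\|_\infty$. These caps differ across $k$ and vanish on suppressed rows. A two-atom, one-sample example with orthonormal atoms and $\lambda_1$ held fixed already shows that no common $\delta$ reproduces the $\ell_\infty$ solution.

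Your $\Gamma$-convergence argument is fine. The bound $0\le\phi_\gamma(u)-\gamma(u)_+\le\log2$ is correct, and your bounded-$D$ assumption is genuinely needed, since otherwise $D\to cD$, $R\to R/c$ makes the regularizers vacuous. But it converges to the barrier or hinge problem, not to Eq.~\ref{eq:novel-obj-hard}. The defensible conclusion is the paper's: the $\ell_\infty$ objective is a surrogate motivated by the soft barrier, not equivalent to it.

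There are two smaller errors.
\begin{itemize}
\item Since $-\log(1-u)$ is convex with tangent $u$ at $0$, the $\ell_1$ term is a \emph{minorizer} of the Beta penalty, not a majorizer.
\item Setting $z_k=0$ is not what maximization over $z$ selects. Optimizing $z_k\in\{0,1\}$ picks $z_k=1$ whenever $\max_iR_{ki}>\delta$, leaving the per-atom cost $\log\bigl(1+e^{-\gamma|\max_iR_{ki}-\delta|}\bigr)$. That cost peaks at the threshold and vanishes in the barrier limit.
\end{itemize}
The $\phi_\gamma$ term arises only if $z=0$ is clamped as a pseudo-observation, i.e.\ one computes the MAP of $R$ given $X$ and $z=0$. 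The paper makes this same move without justification, and you should state it that way.
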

\begin{proof}
    See Appendix~\ref{ap:map_obj}.
\end{proof}
Intuitively, the Beta prior induces an $\ell_1$ penalty that promotes representation sparsity, while the activation prior gives rise to a soft barrier on $\max_i R_{k i}$. The row-wise $\ell_\infty$ term is the convex surrogate used to penalize the same peak-activation cost in the practical PADL objective. 

Unlike classical sparse coding objectives, which balance only reconstruction and coefficient sparsity, the PADL formulation explicitly models dictionary size as an optimization objective. The threshold $\delta$ and the associated $\ell_\infty$ term act as direct control parameters for dictionary size, enabling explicit regulation of the sparsity-storage-accuracy tradeoff.

This formulation connects the hyperparameters $\lambda_1$ and $\lambda_2$  to the prior parameter $\beta$. In the next subsection, we illustrate how the error bound is affected by the hyperparameters and the activation threshold $\delta$.

\subsection{Tradeoff Characterization}

We now characterize how the PADL objective governs the interaction among reconstruction accuracy, representation sparsity, and dictionary usage. 
\begin{proposition}[Suppression of Non-essential Atoms]
\label{prop:suppress}
Assume that a subset of atoms $S$ can reconstruct the data with bounded error $\epsilon$. For any additional atom $k\notin S$ with nonzero coefficient row $r_k$, let $e=X-D_SR_S$ and suppose $|e_i^\top d_k|\le \eta$ for all $i$. If
\begin{equation}
\label{eq:suppress_cond}
\lambda_1
+
\lambda_2\frac{\|r_k\|_\infty}{\|r_k\|_1}
\;>\;
2\eta,
\end{equation}
then setting the coefficient vector $r_k$ to zero strictly decreases the objective. In particular, for atoms whose peak activation exceeds the threshold, $\|r_k\|_\infty\ge\delta$, it is sufficient that
\begin{equation}
\label{eq:suppress_cond_delta}
\lambda_1
+
\lambda_2\frac{\delta}{\|r_k\|_1}
\;>\;
2\eta.
\end{equation}

\end{proposition}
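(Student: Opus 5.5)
I plan to compare the objective of Eq.~\eqref{eq:novel-obj-hard} at the current configuration, where atom $k$ has coefficient row $r_k\neq 0$, with the configuration obtained by setting $r_k=0$ and keeping everything else fixed. I read the statement as saying that the residual without atom $k$ is exactly $e=X-D_SR_S$, so the current reconstruction is $D_SR_S+d_kr_k^\top$. Zeroing $r_k$ removes the penalty $\lambda_1\|r_k\|_1+\lambda_2\|r_k\|_\infty$. It also changes the data-fit term from $\|e-d_kr_k^\top\|_F^2$ to $\|e\|_F^2$. The proof then reduces to showing that the increase in data fit is strictly smaller than the penalty removed.

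First I will expand the data-fit difference column by column:
\[
\|e\|_F^2-\|e-d_kr_k^\top\|_F^2=\sum_{i}\bigl(2R_{ki}\,e_i^\top d_k-R_{ki}^2\|d_k\|_2^2\bigr).
\]
Dropping the nonpositive quadratic term and using $R_{ki}\ge 0$ together with $|e_i^\top d_k|\le\eta$, this difference is at most $2\eta\sum_i R_{ki}=2\eta\|r_k\|_1$. So the change in the objective from zeroing $r_k$ satisfies
\[
\Delta \le 2\eta\|r_k\|_1-\lambda_1\|r_k\|_1-\lambda_2\|r_k\|_\infty .
\]
Since $r_k\neq 0$ gives $\|r_k\|_1>0$, I divide by it. Then $\Delta<0$ holds exactly when $\lambda_1+\lambda_2\|r_k\|_\infty/\|r_k\|_1>2\eta$, which is Eq.~\eqref{eq:suppress_cond}. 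The factor $2$ in the condition confirms that the paper uses the unnormalized $\|\cdot\|_F^2$ of Eq.~\eqref{eq:novel-obj-hard}.

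For the corollary Eq.~\eqref{eq:suppress_cond_delta}, I only need monotonicity. If $\|r_k\|_\infty\ge\delta$, then $\lambda_2\delta/\|r_k\|_1\le\lambda_2\|r_k\|_\infty/\|r_k\|_1$ because $\lambda_2\ge 0$, so the $\delta$-condition implies Eq.~\eqref{eq:suppress_cond}.

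The main obstacle is interpretive rather than technical: the roles of $e$ and of the other atoms outside $S$ must be fixed. I will state explicitly that all other coefficients stay fixed, and that $e$ is the residual of everything except atom $k$. If other non-$S$ atoms are present, $e$ should include their contributions, or they are assumed already zero. The ``bounded error $\epsilon$'' hypothesis is not needed for the strict decrease itself; it only motivates why $\eta$ is small, since $|e_i^\top d_k|\le\|e_i\|\|d_k\|$.
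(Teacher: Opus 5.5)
Your proposal is correct and follows the paper's proof essentially step for step: write the current residual as $e-d_kr_k^\top$, expand it, drop the nonpositive quadratic term to bound the increase in data fit by $2\eta\|r_k\|_1$, subtract the removed penalty $\lambda_1\|r_k\|_1+\lambda_2\|r_k\|_\infty$, and obtain the $\delta$-version from $\|r_k\|_\infty\ge\delta$; your explicit treatment of the other non-$S$ atoms and of the unused $\epsilon$ hypothesis clarifies points the paper leaves implicit. One wording fix: the condition is equivalent to your upper bound being negative, so it is sufficient for $\Delta<0$ but not equivalent to it, and ``exactly when'' should read ``whenever''.
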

\begin{proof}
    See Appendix~\ref{ap:sufficient_cond}
\end{proof}
\begin{figure}
    \centering
    \includegraphics[width=0.7\linewidth]{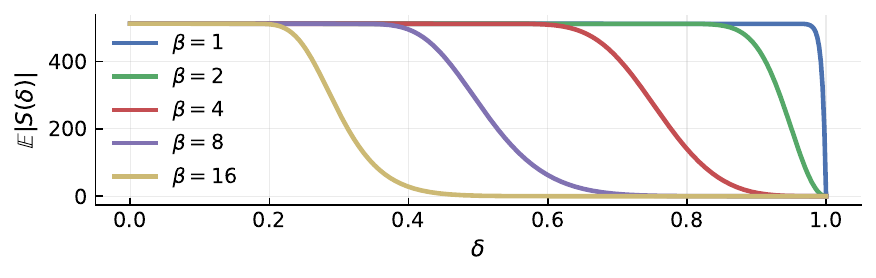}
    \caption{The estimated dictionary activation size as a function of $\delta$ under different values of $\beta$, estimated on CIFAR-100.}
    \label{fig:es_vs_delta}
\end{figure}

This result formalizes the role of the activation penalty: atoms that only exhibit occasional or weak activations are eliminated, even if their coefficients satisfy the $\ell_1$ sparsity constraint. Such atoms correspond to spurious patterns that are likely caused by noise. The model therefore favors visual primitives that are consistently useful, yielding a compact and reusable dictionary. 
\begin{proposition}[Expected Activated Dictionary Size]
\label{prop:ES_delta}
Assume that the activation coefficients satisfy
$R_{ki}\overset{i.i.d.}{\sim}\mathrm{Beta}(1,\beta)$ for
$k=1,\ldots,K$ and $i=1,\ldots,N$.
Define the active atom set induced by threshold $\delta$ as
\[
S(\delta)
\;:=\;
\{\,k \;:\; \max_{i} R_{ki} > \delta \,\}.
\]
Let $F(r)=1-(1-r)^{\beta}$ denote the cumulative distribution function
of $\mathrm{Beta}(1,\beta)$.
Then the expected size of the active set is
\begin{equation}
\label{eq:ES_delta}
\begin{aligned}
    \mathbb{E}\,|S(\delta)|
\;=\;
K\bigl(1 - F(\delta)^N\bigr)
\;=&\;
K\left[1 - \bigl(1-(1-\delta)^{\beta}\bigr)^N\right].
\end{aligned}
\end{equation}
In particular, $\mathbb{E}|S(\delta)|$ is a strictly decreasing and
smooth function of $\delta$, with
\[
\mathbb{E}|S(0)| = K,
\qquad
\mathbb{E}|S(1)| = 0.
\]
\end{proposition}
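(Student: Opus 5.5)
The plan is to write $|S(\delta)|$ as a sum of indicators and apply linearity of expectation:
\[
|S(\delta)|=\sum_{k=1}^K \mathbf{1}\{\max_i R_{ki}>\delta\},
\qquad
\mathbb{E}|S(\delta)|=\sum_{k=1}^K \Pr\bigl(\max_i R_{ki}>\delta\bigr).
\]
This step needs no independence across rows. Within a fixed row $k$, the coefficients $R_{k1},\dots,R_{kN}$ are i.i.d. by assumption. Hence
\[
\Pr\bigl(\max_i R_{ki}\le\delta\bigr)=\prod_{i=1}^N \Pr(R_{ki}\le\delta)=F(\delta)^N .
\]
Because $F$ is continuous, the strict versus non-strict inequality in the definition of $S(\delta)$ is irrelevant. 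Summing $1-F(\delta)^N$ over the $K$ identically distributed rows gives $K(1-F(\delta)^N)$.

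Next I will verify the closed form of $F$. Integrating the density $\beta(1-r)^{\beta-1}$ from $0$ to $\delta$ gives
\[
F(\delta)=\bigl[-(1-r)^\beta\bigr]_0^\delta=1-(1-\delta)^\beta .
\]
Substituting this yields Eq.~\ref{eq:ES_delta}. The boundary values then follow directly: $F(0)=0$ gives $\mathbb{E}|S(0)|=K$, and $F(1)=1$ gives $\mathbb{E}|S(1)|=0$.

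For monotonicity and smoothness, set $g(\delta)=K\bigl[1-(1-(1-\delta)^\beta)^N\bigr]$ and differentiate:
\[
g'(\delta)=-K N\,\bigl(1-(1-\delta)^\beta\bigr)^{N-1}\,\beta(1-\delta)^{\beta-1}.
\]
On the open interval $(0,1)$ every factor is strictly positive, so $g'<0$ there. By continuity on $[0,1]$, $g$ is strictly decreasing on the closed interval.

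The only point needing care is the smoothness claim. The composition is $C^\infty$ on $(0,1)$. At the endpoints, however, $(1-\delta)^{\beta}$ with non-integer $\beta>1$ is only finitely differentiable at $\delta=1$. I will therefore state smoothness on $(0,1)$, with continuity and at least $C^1$ regularity on $[0,1]$, since $\beta>1$. Likewise $g'(0)=0$ when $N\ge2$, but that vanishing of the derivative at an endpoint does not affect strict monotonicity.
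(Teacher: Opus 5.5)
Your proof is correct and follows essentially the same route as the paper's appendix argument: write $|S(\delta)|$ as a sum of indicators, use linearity of expectation, get $F(\delta)^N$ from within-row independence, substitute $F$, and read off strict decrease from the sign of $-KN\,F(\delta)^{N-1}f(\delta)$ on $(0,1)$. Your additions are sound minor refinements: integrating the density to confirm $F$, extending strict monotonicity to $[0,1]$ by continuity, and noting the finite regularity at $\delta=1$ for non-integer $\beta$. The paper likewise only asserts smoothness on $(0,1)$.
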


\begin{proof}
    See Appendix~\ref{ap:exp_s}
\end{proof}
The activated dictionary size $\mathbb{E}|S(\delta)|$ is strictly decreasing in $\delta$, as shown in Figure~\ref{fig:es_vs_delta}, which indicates that a larger activation threshold decreases the dictionary size by activating fewer atoms.
\begin{theorem}[High-Probability Reconstruction Bound]
\label{prop:err_bound}
Under the active-span restricted-eigenvalue condition stated in Appendix~\ref{ap:err_bound}, the average reconstruction error satisfies, with probability at least $1-\tau$,
\begin{equation}
\frac{\|e\|_F^2}{N}
\;\le\;
C(|S|)
\left(
\lambda_1
+
\frac{\lambda_2\delta}{U(N,\beta,\tau)}
\right)^2,
\end{equation}
where $C(|S|)$ grows linearly with $|S|$ and
\[
U(N,\beta,\tau)
:=
\frac{N}{1+\beta}
+
\sqrt{\frac{N}{2}\log\frac{1}{\tau}}
\]
grows linearly with $N$.
\end{theorem}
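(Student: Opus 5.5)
The plan is to combine the first-order optimality (KKT) conditions of the hard objective in Proposition~\ref{prop:map_objective} with a concentration bound on the row $\ell_1$ norms. The restricted-eigenvalue condition will convert gradient information into an error bound. Let $(D,R)$ be a stationary point, $S$ the active set, and $e=X-D_SR_S$ the residual.

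\textbf{Step 1 (correlation bound).} For each $k\in S$, the subgradient condition with respect to $r_k$ reads
\[
2D_{:,k}^\top e \in \lambda_1\,\partial\|r_k\|_1+\lambda_2\,\partial\|r_k\|_\infty ,
\]
together with the box constraint $[0,1]$. The subgradient of $\|r_k\|_\infty$ is supported on the argmax coordinates and has total mass $1$. Averaging the stationarity relation against $r_k/\|r_k\|_1$ gives an aggregated correlation bound:
\[
\Bigl|\tfrac{1}{\|r_k\|_1}\textstyle\sum_i R_{ki}\, e_i^\top d_k\Bigr|\le \tfrac12\Bigl(\lambda_1+\lambda_2\tfrac{\|r_k\|_\infty}{\|r_k\|_1}\Bigr).
\]
This mirrors the quantity in Proposition~\ref{prop:suppress}. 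Since the row has been thresholded, $\|r_k\|_\infty\le\delta$ at the barrier, so the right-hand side is at most $\frac12\bigl(\lambda_1+\lambda_2\delta/\|r_k\|_1\bigr)$.

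\textbf{Step 2 (concentration of $\|r_k\|_1$).} Under the prior $R_{ki}\sim\mathrm{Beta}(1,\beta)$, the mean is $\mathbb{E}\|r_k\|_1=N/(1+\beta)$, and the entries lie in $[0,1]$. Hoeffding's inequality therefore gives
\[
\|r_k\|_1\le \frac{N}{1+\beta}+\sqrt{\tfrac N2\log\tfrac1\tau}=U(N,\beta,\tau)
\]
with probability $1-\tau$. The difficulty is that the $\delta$ term in the target bound has $U$ in the denominator, so I actually need $\|r_k\|_1$ bounded from below in terms of $U$, not above. I would try two routes. The first is to state the result in the form the statement wants, using $U$ as the scale of $\|r_k\|_1$. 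The second is to replace the exact stationarity averaging by normalizing with $U$, so that the relevant coefficient becomes $\lambda_2\delta/U$ after bounding $\|r_k\|_1\le U$ in the complementary direction. A union bound over $k\in S$ (or over $K$) would adjust $\tau$ by a factor that is absorbed into $C(|S|)$ or the log term.

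\textbf{Step 3 (restricted eigenvalue).} I would assume the appendix condition has the form
\[
\|D_S^\top e\|_F^2\ \ge\ \kappa\,\|e\|_F^2/\bigl(N\,|S|\bigr)\cdot N
\]
for residuals in the active span, i.e.\ the residual cannot be nearly orthogonal to all active atoms. Combining this with the per-atom, per-sample correlation bound $|e_i^\top d_k|\lesssim \frac12(\lambda_1+\lambda_2\delta/U)$, squaring and summing over $i$ and $k\in S$, gives
\[
\|e\|_F^2\le N|S|\,\kappa^{-1}\,\tfrac14\,(\lambda_1+\lambda_2\delta/U)^2 ,
\]
so that $C(|S|)=|S|/(4\kappa)$, which is linear in $|S|$.

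\textbf{Main obstacle.} The hardest step is passing from the row-averaged KKT inequality to per-sample correlations $e_i^\top d_k$. This matters most for samples with $R_{ki}=0$, where only the $\ell_1$ subgradient inequality $|2e_i^\top d_k|\le\lambda_1$ is available, and for inactive coordinates. I expect the appendix's restricted-eigenvalue condition to be formulated precisely to bridge this gap, that is, stated in terms of the weighted correlations $\sum_i R_{ki}e_i^\top d_k$. I would therefore first try to phrase the RE condition in that weighted form, so that Steps 1--3 chain together directly.
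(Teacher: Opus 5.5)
Your Step 3 is essentially the paper's geometric step. Steps 1--2 are supposed to supply the per-sample bound $|e_i^\top d_k|\le \tfrac{1}{2}(\lambda_1+\lambda_2\delta/U)$ that Step 3 consumes, and they cannot do so.

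Both of your substitutions go the wrong way. Every $k\in S$ has $\|r_k\|_\infty>\delta$ by the definition of $S(\delta)$, and the $\ell_\infty$ objective imposes no barrier. So you cannot replace $\|r_k\|_\infty$ by $\delta$ in an upper bound. Hoeffding gives $\|r_k\|_1\le U$, so you also cannot replace $\|r_k\|_1$ by $U$ in a denominator. You noticed this second problem, but neither of your two suggested routes actually reverses that inequality.

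More fundamentally, stationarity does not give a per-sample bound of this size at all. At a unique peak $i^\ast$ of an active row, the $\ell_\infty$ subgradient puts all its mass on $i^\ast$. This forces $2e_{i^\ast}^\top d_k\ge\lambda_1+\lambda_2$, with equality if $R_{ki^\ast}<1$, and that exceeds $\lambda_1+\lambda_2\delta/U$ whenever $U>\delta$, i.e.\ in every nontrivial regime. At coordinates with $R_{ki}=0$, the box constraint only yields the one-sided bound $2e_i^\top d_k\le\lambda_1$. So no reformulation of the RE condition in terms of weighted correlations will bridge the row-averaged KKT identity and the per-sample bound.

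The missing idea is a change of role: the paper does not derive the bound on $\eta:=\max_{i,k}|e_i^\top d_k|$ from optimality; it assumes it. The quantities $\delta$ and $U$ enter only through the suppression lemma applied to non-essential atoms $k\notin S$, where the inequalities point the favourable way:
\begin{itemize}
\item those atoms are assumed to satisfy $\|r_k\|_\infty\ge\delta$;
\item Hoeffding gives $\|r_k\|_1\le U$ with probability $1-\tau$;
\item hence $\lambda_1+\lambda_2\|r_k\|_\infty/\|r_k\|_1\ge\lambda_1+\lambda_2\delta/U$, and $2\eta\le\lambda_1+\lambda_2\delta/U$ becomes a sufficient condition for suppression.
\end{itemize}
The reconstruction bound is then stated \emph{under this condition}. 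The remaining work is your Step 3 with the per-sample condition $\|D_S^\top e_i\|_2^2\ge\alpha_S\|e_i\|_2^2$. This gives $\alpha_S\|e_i\|_2^2\le\sum_{k\in S}(e_i^\top d_k)^2\le|S|\eta^2$, hence $\|e\|_F^2\le N|S|\eta^2/\alpha_S$ and $C(|S|)=|S|/(4\alpha_S)$.

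Note also that the RE form you wrote simplifies to $\|D_S^\top e\|_F^2\ge\kappa\|e\|_F^2/|S|$, which would produce a factor $|S|^2$. To get $C$ linear in $|S|$ you need $\|D_S^\top e\|_F^2\ge\kappa\|e\|_F^2$ or the per-sample version.
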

\begin{proof}
    See Appendix~\ref{ap:err_bound}
\end{proof}
The average error bound depends jointly on the size of the activated dictionary and the activation threshold, both of which are inherently controlled by the threshold.
\begin{corollary}[Optimal Activation Threshold Estimation]
For fixed data size, data distribution, and confidence level, the optimal threshold $\delta^*$ satisfies:
\begin{equation}
\label{eq:delta_stationary}
\frac{N\,F(\delta)^{N-1}\,f(\delta)}{1-F(\delta)^N}
=
\frac{2\lambda_2}{U(N,\beta,\tau)\left(\lambda_1+
\frac{\lambda_2\delta}{U(N,\beta,\tau)}\right)}.
\end{equation}
Here,
\[
F(\delta) := \Pr(R_{k i}\le \delta) = 1-(1-\delta)^\beta,
f(\delta) := F'(\delta)=\beta(1-\delta)^{\beta-1}.
\]
\end{corollary}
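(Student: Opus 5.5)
The plan is to combine Proposition~\ref{prop:ES_delta} with Theorem~\ref{prop:err_bound} into a single scalar objective in $\delta$, and then read off its first-order stationarity condition. By Theorem~\ref{prop:err_bound}, $C(|S|)$ grows linearly with $|S|$, so we write $C(|S|)=c\,|S|$ with $c>0$ independent of $\delta$. Substituting the expected active-set size from Proposition~\ref{prop:ES_delta} gives the $\delta$-dependent surrogate
\[
B(\delta) \;:=\; c\,K\bigl(1-F(\delta)^N\bigr)\left(\lambda_1+\frac{\lambda_2\delta}{U}\right)^2,
\qquad U:=U(N,\beta,\tau).
\]
We define $\delta^*$ as the minimizer of $B$ over $(0,1)$, with $N$, $\beta$, $\tau$ (and hence $U$) held fixed. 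This is the precise meaning of ``fixed data size, data distribution, and confidence level'' in the statement.

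Next, take logarithms, which is legitimate since both factors are positive on $(0,1)$:
\[
\log B(\delta) = \text{const} + \log\bigl(1-F(\delta)^N\bigr) + 2\log\Bigl(\lambda_1+\tfrac{\lambda_2\delta}{U}\Bigr).
\]
Differentiate using $\frac{d}{d\delta}F(\delta)^N = N F(\delta)^{N-1} f(\delta)$, with $f=F'=\beta(1-\delta)^{\beta-1}$. Setting the derivative to zero gives
\[
-\frac{N F(\delta)^{N-1} f(\delta)}{1-F(\delta)^N} + \frac{2\lambda_2/U}{\lambda_1+\lambda_2\delta/U} = 0,
\]
which rearranges exactly to Eq.~\ref{eq:delta_stationary}.

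To justify that this stationary point is a genuine interior optimum, check the boundary behaviour.
\begin{itemize}
\item As $\delta\to 1$, $F(\delta)\to 1$, so the first factor vanishes and $B\to 0$. The size term favors large $\delta$.
\item The second factor is strictly increasing in $\delta$, so it favors small $\delta$.
\end{itemize}
The corollary only asserts that $\delta^*$ satisfies the first-order condition. It therefore suffices to note that any interior minimizer of the smooth function $B$ is stationary, and to exhibit the two competing monotone terms.

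The main obstacle is interpretive rather than computational. We must justify replacing the random $|S|$ in the high-probability bound by its expectation $\mathbb{E}|S(\delta)|$, and treating $C(\cdot)$ as exactly proportional to $|S|$. I would state this substitution explicitly: the corollary optimizes the bound evaluated at the expected activated dictionary size under the Beta$(1,\beta)$ prior. Under that reading the derivation is a direct calculus step. If $C(|S|)=a+c|S|$ with an offset $a$, the same computation holds with $1-F^N$ replaced by $1-F^N+a/(cK)$; I would remark that the stated form corresponds to $a=0$.
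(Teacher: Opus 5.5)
Your derivation of the first-order condition is the same as the paper's. The appendix also inserts $\mathbb{E}|S(\delta)|=K(1-F(\delta)^N)$ into the bound of Lemma~\ref{lem:err_bound_app}, holds $\alpha_S$ fixed (your constant $c$), and differentiates $\Phi(\delta)=(1-F(\delta)^N)(\lambda_1+\lambda_2\delta/U)^2$; log-differentiating instead of using the product rule changes nothing.

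The gap is in your optimality paragraph, and your own boundary computation already refutes it. Since $B>0$ on $(0,1)$ and $B\to 0$ as $\delta\to1$, $B$ has no minimizer on $(0,1)$. So ``define $\delta^*$ as the minimizer'' defines nothing, and ``any interior minimizer is stationary'' is vacuous. Worse, the stationary point is a maximizer. Write $(\log B)'=h-g$, where $h(\delta)=\frac{2\lambda_2/U}{\lambda_1+\lambda_2\delta/U}$ is strictly decreasing and
\[
g(\delta)=\frac{N F(\delta)^{N-1} f(\delta)}{1-F(\delta)^N}=\frac{\beta}{1-\delta}\cdot\frac{N}{\sum_{j=0}^{N-1}F(\delta)^{j-N+1}} .
\]
Both factors of $g$ are positive and increasing in $\delta$, because every exponent satisfies $j-N+1\le 0$, so $g$ is strictly increasing. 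For $N\ge 2$ we have $g(0^+)=0<h(0)=2\lambda_2/(U\lambda_1)$, while $g(\delta)\sim\beta/(1-\delta)\to\infty$ as $\delta\to 1$. Hence $h-g$ has exactly one zero, $\log B$ increases before it and decreases after it, and the solution of Eq.~\ref{eq:delta_stationary} is the unique global \emph{maximizer} of the proxy. Meanwhile $\inf_{(0,1)}B=0$ is approached only as $\delta\to1$. Two competing monotone factors do not by themselves produce an interior minimum, and here they produce the opposite.

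The paper's appendix has the same defect. It also says it minimizes $\Phi$, and its non-degenerate-regime remark only excludes $\delta=1$, where $S$ is empty and the active-span condition fails. That removes the point where the infimum would be attained but does not create an interior minimum. What your argument and the paper's actually establish is that Eq.~\ref{eq:delta_stationary} characterizes the unique critical point of the proxy; you should state it that way.

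To call that point an optimal threshold, you need a proxy that penalizes a shrinking active set. One option is to let $\alpha_S$ depend on $|S|$ rather than holding it fixed. Another is to impose a floor on $\mathbb{E}|S(\delta)|$. Your offset remark $C(|S|)=a+c|S|$ points in the right direction: with $a>0$ the proxy rises again near $\delta=1$, since $\frac{d}{d\delta}F(\delta)^N\to 0$ for $\beta>1$, so an interior local minimum can exist. However, it then solves the modified equation with $1-F^N+a/(cK)$, not the stated one.
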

\begin{proof}
    See Appendix~\ref{ap:opt_delta}
\end{proof}
As the number of samples $N$ increases, $U(N,\beta,\tau)$ grows linearly, so the threshold-dependent contribution in the reconstruction bound decays as $\mathcal{O}(1/N)$. This indicates that the threshold term does not inflate the average error bound, while the active dictionary size remains governed by the choice of $\delta$.

Together, these results provide a theoretical characterization of the activated dictionary size and reconstruction-error bound. They reduce the complex sparsity-storage-accuracy tradeoff to a single controlled variable $\delta$ with a computable optimal value, thereby reducing the need for tedious hyperparameter grid search with heuristic settings.

\subsection{Dictionary Learning Algorithm}

Using the above theoretical results, we proposed a modified PADL algorithm with analytical hyperparameter estimation from data, as shown in Algorithm~\ref{alg:dictionary_learning}. This algorithm adopts alternating optimization~\cite{mairal2009online} with projected gradient descent to jointly learn the dictionary and the sparse representation while ensuring convergence, a standard technique in previous literature~\cite{tseng2001convergence,attouch2013convergence,bolte2014proximal}. To estimate the hyperparameters, we first perform a warm-up phase with tentative initialization, during which we obtain the Beta parameter and compute the corresponding $\lambda$ and $\delta^\ast$, which are then used in the subsequent dictionary learning process. During testing, the dictionary remains fixed, and the hyperparameters and activation threshold are set to their estimated values; only the sparse representation $R$ is updated. Note that, while the hyperparameters could be updated iteratively during alternating optimization to obtain a more precise estimate of the prior distribution, we empirically find that one-shot warm-up estimation is sufficient in our experiments, as shown in Appendix~\ref{app:warmup_sensitivity}.

\begin{algorithm}[t]
\caption{Parsimoniously Activated Dictionary Learning}
\label{alg:dictionary_learning}
\begin{algorithmic}[1]
\REQUIRE 
Data matrix $\mathbf{X}$; 
dictionary size $K$; 
outer iterations $T$; 
inner steps $k_r, k_d$;
warm-up steps $t_s$
\ENSURE 
Learned dictionary $\mathbf{D}\in\mathbb{R}^{d\times K}$

\STATE Use preprocessed data:
$\mathbf{X}_{\text{norm}} \gets \mathbf{X}$

\STATE Initialize $\mathbf{D}\sim\mathcal{U}[0,1]$, $\mathbf{R}\sim \mathrm{Beta}(1,1)$
\STATE Initialize $(\lambda_1,\lambda_2,\delta) \gets (0,1,0)$ as warm-up defaults

\FOR{$t=1$ to $T$}
        
    \IF{$t = t_s$}
        \STATE Fit empirical $\mathbf{R}$ with $\mathrm{Beta}(1,\beta)$
        \STATE Compute $\lambda_1\approx\beta-1$, $\lambda_2\gets 1$, and $\delta^\ast$ using Eq.~\ref{eq:delta_stationary}
        \STATE Re-initialize $\mathbf{D}\sim\mathcal{U}[0,1]$, 
        $\mathbf{R}\sim \mathrm{Beta}(1,\beta)$
        
    \ENDIF

    \FOR{$\ell=1$ to $k_r$}
        \STATE Update $\mathbf{R}$ by solving $\min_{\mathbf{R}}\;
        \|\mathbf{X}_{\text{norm}}-\mathbf{D}\mathbf{R}\|_F^2
        + \lambda_1\|\mathbf{R}\|_1
        + \lambda_2 \sum_{k=1}^K \|r_k\|_\infty$.
        \STATE Project coefficients: $\mathbf{R}\gets \Pi_{[0,1]}(\mathbf{R})$.
    \ENDFOR

    \FOR{$\ell=1$ to $k_d$}
        \STATE Update $\mathbf{D}$ by solving $\min_{\mathbf{D}}\;
        \|\mathbf{X}_{\text{norm}}-\mathbf{D}\mathbf{R}\|_F^2
        \ \text{s.t.}\ \|d_k\|_2=1,\ \forall k$.
    \ENDFOR

\ENDFOR

\STATE \textbf{return} $\mathbf{D}$
\end{algorithmic}
\end{algorithm}

\subsection{VLM Inference Compression}

PADL can also be used as a structured visual compression module for VLM
inference. Given visual tokens $\mathbf{Z}_V$ produced by a vision encoder, we
learn a visual dictionary $\mathbf{D}_V$ and sparse codes $\mathbf{R}_V$ such
that $\mathbf{Z}_V \approx (\mathbf{D}_V\mathbf{R}_V)^T$. The row-wise activation
regularizer selects a compact active sub-dictionary $\mathbf{D}_S$, while the
sparse codes can be quantized for transmission or storage. As shown in
Figure~\ref{fig:imp_vlm}, the compressed representation can be used in two
ways: reconstructing visual tokens before feeding them to the LLM, or passing
the sparse representation through a lightweight adapter for direct inference.
The detailed VLM formulation, compression protocol, and adapter setting are
provided in Appendix~\ref{app:vlm_experiments}.

\begin{figure}[t]
    \centering
    \includegraphics[width=0.5\linewidth]{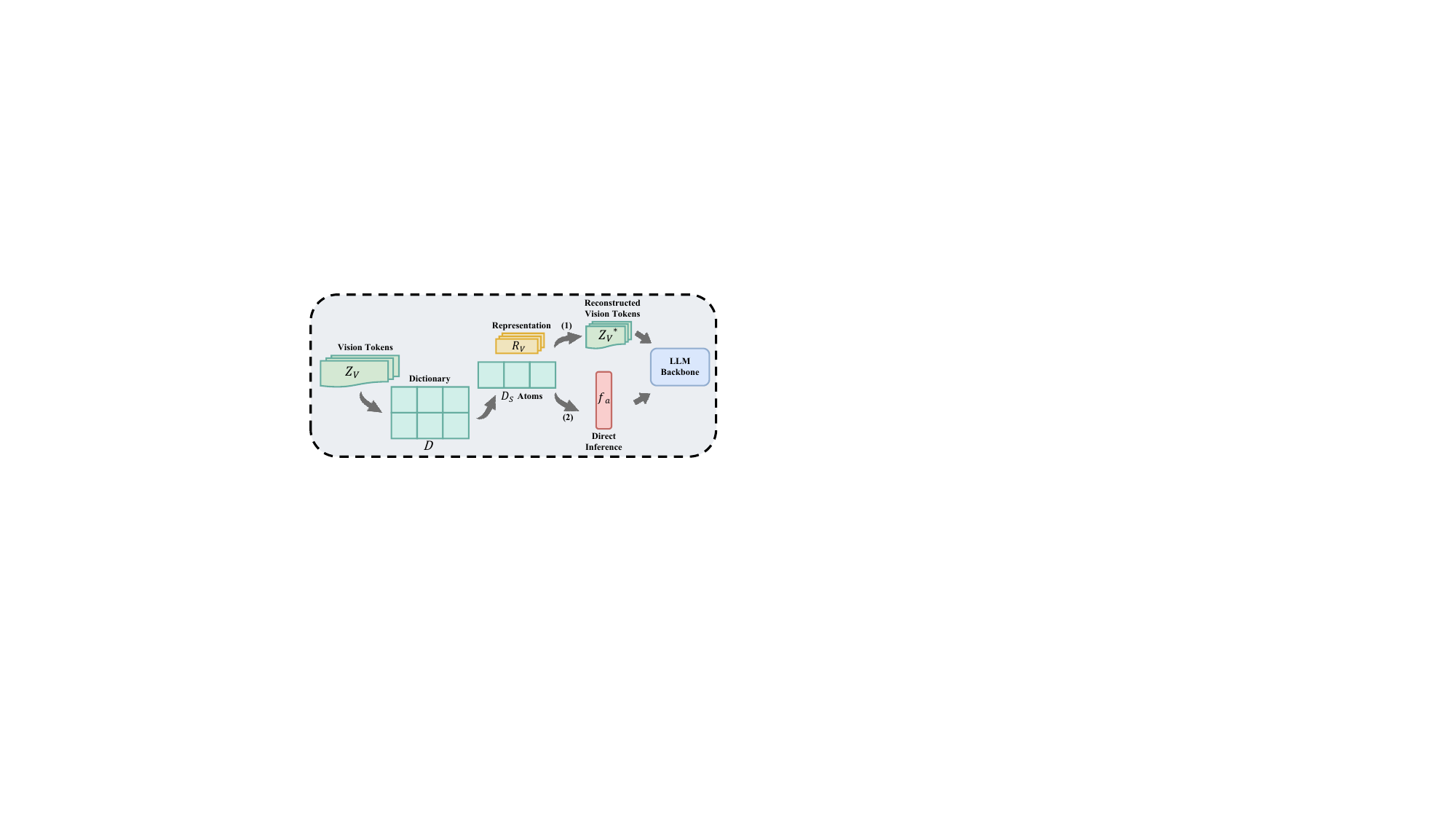}
    \caption{Two implementations on VLMs are considered: reconstructing visual tokens and performing direct inference via a fine-tuned adapter.}
    \label{fig:imp_vlm}
\end{figure}

\section{Experiment}
Our experimental evaluation focuses on validating the theoretical contributions of our framework, particularly the Bayesian interpretation and the parameter selection guidelines. We conduct experiments on CIFAR-100~\cite{krizhevsky2009learning} and SVHN~\cite{netzer2011reading} to demonstrate the practical implications of our theoretical framework. Each dataset contains color images of size 32 $\times$ 32. All images are converted to grayscale, and non-overlapping 8 $\times$ 8 patches are extracted for training and evaluation. The dictionary size is fixed at 128 atoms across all experiments. We use 20,000 random patches for training and reserve 2,000 patches for testing and performance evaluation. Unless otherwise specified, the number of activated atoms denotes the number of dictionary atoms satisfying $\max_i |R_{ki}|>\delta^\ast$ on the evaluation set, using the same thresholding rule across methods.

For VLM applications, we conduct experiments with LLaVA-1.5~\cite{liu2024improved} on VQA tasks for image inputs and Video-LLaVA~\cite{lin2024video} for video inputs. The dictionary size is fixed at 1024 atoms. For image inputs, our evaluation includes VQAv2~\cite{goyal2017making}, ScienceQA~\cite{lu2022learn}, TextVQA~\cite{singh2019towards}, POPE hallucination benchmark~\cite{li2023evaluating}, MME~\cite{fu2023mme} and MMBench~\cite{liu2024mmbench}. We apply $4\times$ dimensional compression together with $2\times$ quantization. The adapter layer is fine-tuned on the LLaVA-1.5 instruction-tuning dataset~\cite{liu2024improved}. For video inputs, our evaluation is conducted on MSVD-QA~\cite{chen2023x}, MSRVTT-QA~\cite{xu2016msr}, TGIF-QA~\cite{jang2017tgif} and ActivityNet-QA~\cite{yu2019activitynet}. We apply $8\times$ dimensional compression together with $2\times$ quantization. The adapter layer is fine-tuned on a dataset provided by Valley ~\cite{luo2023valley}. We compare our method with two SOTA token reduction methods: LLaVA-PruMerge~\cite{shang2024llava} and freePruner~\cite{xu2024freepruner}. The variant without fine-tuning is denoted as \textbf{PADL}, while the variant with fine-tuning is denoted as \textbf{PADL+}. For image benchmarks, we follow the standard evaluation metrics of each benchmark. For video QA, we report accuracy (\%) and GPT-based answer-quality scores assessed using GPT-3.5-Turbo, following prior work.

Experiments were run on Ubuntu 22.04.5 with two Intel Xeon Gold 6348 CPUs, and a single NVIDIA A800 80GB PCIe GPU. The CUDA version was 12.2.

\subsection{Hyperparameter Estimation Validation}

\begin{table}[h]
\centering
\caption{Validation of theoretical parameter estimates.}
\label{tab:parameter_validation}
\small
\setlength{\tabcolsep}{3pt}
\begin{tabular}{lccc}
\hline
\textbf{Parameter} & \textbf{Theoretical} & \textbf{Empirical} & \textbf{Relative Error $\%$}\\
\hline
$\lambda_1$ & 1.31 & 1.33 & 1.5\\
$\lambda_2$ & 1.00 & 1.01 & 1.0\\

\hline
\end{tabular}
\end{table}

To validate the precision of our theoretically predicted hyperparameter settings, we compared the theoretically recommended parameters with empirically optimal values found through extensive grid search. Table~\ref{tab:parameter_validation} reports the results and corresponding relative errors. 

\begin{figure}[t]
    \centering
    \begin{minipage}[t]{0.40\linewidth}
        \centering
        \includegraphics[width=\linewidth]{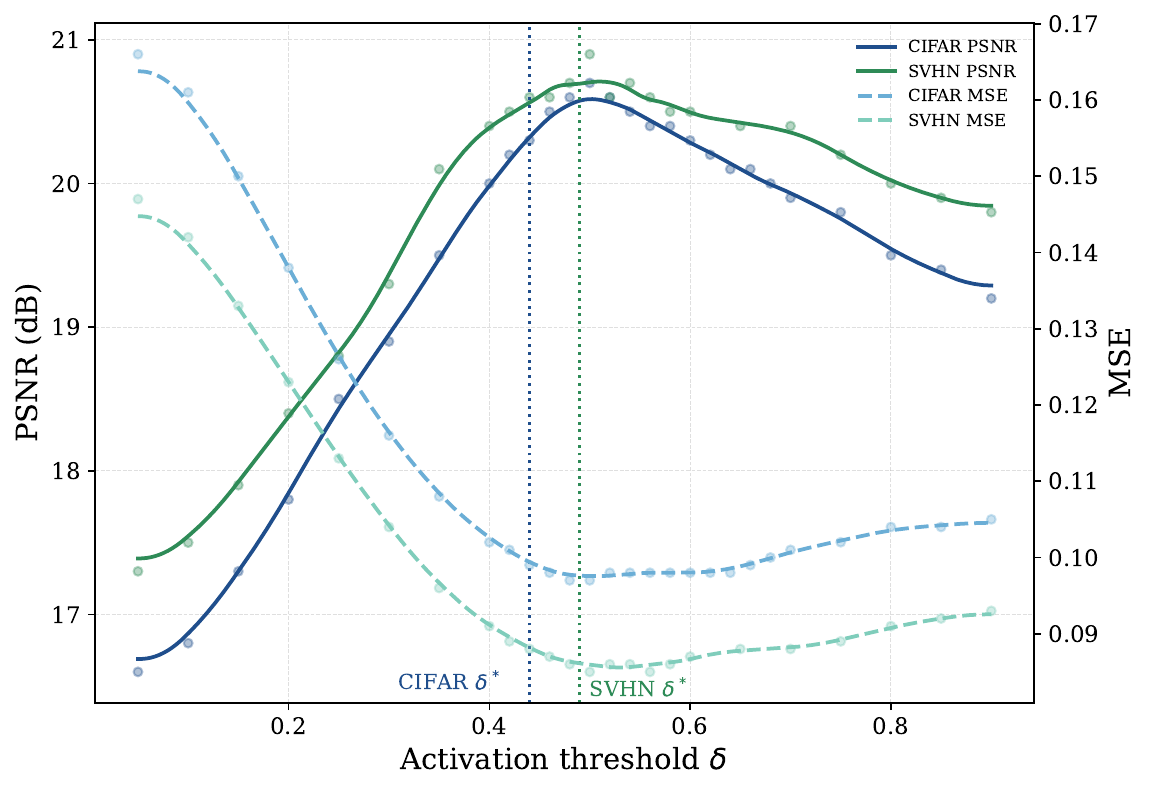}
        \caption{Reconstruction error under varying activation thresholds $\delta$ on CIFAR-100 and SVHN; dashed lines indicate the predicted $\delta^*$.}
        \label{fig:error-delta}
    \end{minipage}
    \hfill
    \begin{minipage}[t]{0.5\linewidth}
        \centering
        \includegraphics[width=\linewidth]{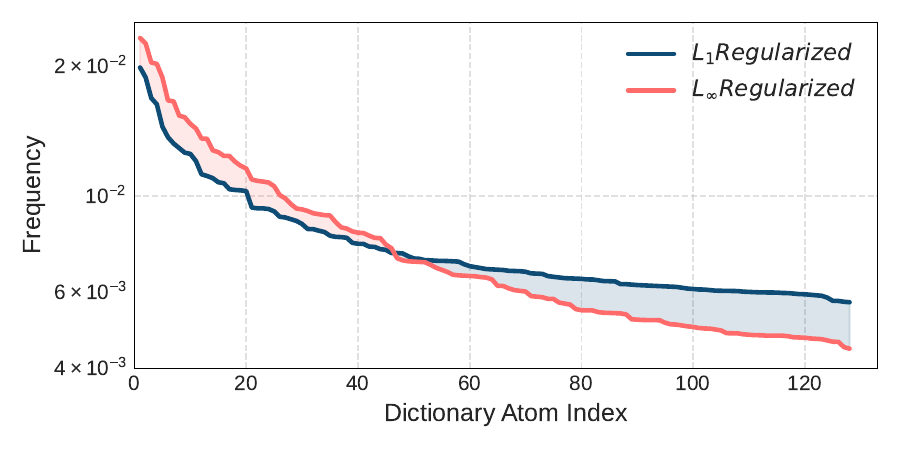}
        \caption{Atom utilization frequency on CIFAR-100 under $L_\infty$- and $L_1$-regularized dictionaries.}
        \label{fig:atoms_sorted}
    \end{minipage}
\end{figure}

To validate the error trend under different activation thresholds $\delta$, we conduct experiments with varying $\delta$ values on both datasets. As shown in Figure~\ref{fig:error-delta}, the theoretically predicted optimal $\delta^*$ is indicated by the vertical dashed lines. These results show that our theoretical predictions closely align with the empirical optima, supporting the derived threshold-selection rule and the associated theoretical characterization.

\subsection{Parsimonious Activation Validation}

Figure~\ref{fig:atoms_sorted} presents a comparison of dictionary atom usage frequencies between the $L_\infty$- and $L_1$-regularized dictionaries, evaluated under identical parameter configurations and averaged over the CIFAR-100 test set. The usage in the PADL model is concentrated on a smaller subset of the original dictionary, suggesting that the regularization on $R$ encourages a more compact and efficient representation. 

\subsection{Comparative Reconstruction Performance}

While our primary focus is theoretical validation, we include a concise comparison of reconstruction performance to demonstrate practical utility. Table~\ref{tab:reconstruction_comparison} shows that the PADL method achieves competitive reconstruction quality while providing theoretical guarantees.

\begin{table}[t]
\centering
\caption{Reconstruction performance comparison on CIFAR-100 and SVHN. For recent baselines and PADL, results are reported as mean $\pm$ standard deviation over 10 repeated runs.}
\label{tab:reconstruction_comparison}
\small
\setlength{\tabcolsep}{3pt}
\resizebox{\linewidth}{!}{%
\begin{tabular}{lcccccc}
\hline
\multirow{2}{*}{\textbf{Method}} &
\multicolumn{3}{c}{\textbf{CIFAR-100}} &
\multicolumn{3}{c}{\textbf{SVHN}} \\
\cline{2-7}
& \textbf{RMSE} & \textbf{PSNR} & \textbf{SSIM}
& \textbf{RMSE} & \textbf{PSNR} & \textbf{SSIM} \\
\hline
KSVD~\cite{aharon2006k} & 0.152 & 17.87 & 0.472 & 0.147 & 16.34 & 0.449 \\
DDL~\cite{tariyal2016deep} & 0.132 & 18.07 & 0.514 & 0.124 & 18.01 & 0.527 \\
GDDL~\cite{tariyal2016greedy} & 0.123 & 18.32 & 0.519 & 0.118 & 19.33 & 0.539 \\
BPFA~\cite{paisley2009nonparametric} & 0.127 & 18.14 & 0.520 & 0.121 & 18.72 & 0.529 \\
IBP~\cite{dang2017indian} & 0.119 & 18.58 & 0.535 & 0.119 & 19.57 & 0.545 \\
CRsAE~\cite{tolooshams2020deep} & $0.106 \pm 0.007$ & $20.13 \pm 0.56$ & $0.558 \pm 0.035$ & $0.092 \pm 0.006$ & $20.30 \pm 0.60$ & $0.565 \pm 0.025$ \\
JumpReLU SAE~\cite{rajamanoharan2024jumping} & $0.104 \pm 0.016$ & $19.58 \pm 0.97$ & $0.520 \pm 0.041$ & $0.091 \pm 0.010$ & $20.01 \pm 1.19$ & $0.518 \pm 0.042$ \\
$L_1$-Regularized DL & $0.112 \pm 0.006$ & $18.98 \pm 0.54$ & $0.545 \pm 0.030$ & $0.094 \pm 0.012$ & $20.46 \pm 1.03$ & $0.559 \pm 0.037$ \\
\textbf{PADL} & $\mathbf{0.099 \pm 0.009}$ & $\mathbf{20.56 \pm 0.55}$ & $\mathbf{0.568 \pm 0.030}$ & $\mathbf{0.090 \pm 0.006}$ & $\mathbf{20.88 \pm 1.03}$ & $\mathbf{0.583 \pm 0.037}$ \\
\hline
\end{tabular}
}
\end{table}

\subsection{Computational Cost}

\begin{table}[t]
\centering
\begin{minipage}[t]{0.57\linewidth}
\centering
\captionof{table}{Computational-cost summary on CIFAR-100 using 2,000 training patches.}
\label{tab:main_computational_cost}
\scriptsize
\setlength{\tabcolsep}{2pt}
\renewcommand{\arraystretch}{1.08}
\resizebox{\linewidth}{!}{
\begin{tabular}{lcc}
\toprule
Method & Cost & Outcome \\
\midrule
K-SVD~\cite{aharon2006k} & 661.4s & 20 atoms / 0.152 RMSE \\
CRsAE~\cite{tolooshams2020deep} & 32.0s & 56 atoms / 0.106 RMSE \\
JumpReLU SAE~\cite{rajamanoharan2024jumping} & 3.1s & 61 atoms / 0.104 RMSE \\
PADL & 44.2s & 19 atoms / 0.099 RMSE \\
\bottomrule
\end{tabular}
}
\end{minipage}
\hfill
\begin{minipage}[t]{0.39\linewidth}
\centering
\captionof{table}{Ablation study on CIFAR-100.}
\label{tab:ablation_study}
\scriptsize
\setlength{\tabcolsep}{2pt}
\renewcommand{\arraystretch}{1.08}
\resizebox{\linewidth}{!}{
\begin{tabular}{lccc}
\toprule
Configuration & RMSE & PSNR & SSIM \\
\midrule
PADL & 0.099 & 20.56 & 0.568 \\
w/o $L_\infty$ & 0.112 & 18.98 & 0.545 \\
w/o $L_1$ & 0.108 & 19.17 & 0.549 \\
w/o $L_1$ \& $L_\infty$ & 0.117 & 18.49 & 0.525 \\
\bottomrule
\end{tabular}
}
\end{minipage}
\end{table}

Since PADL introduces a row-wise activation regularizer, we explicitly report training cost in the main text. The cost study uses a smaller 2,000-patch training subset to make wall-clock comparisons across baselines tractable. Table~\ref{tab:main_computational_cost} summarizes training-time cost on the controlled dictionary-learning benchmark. Although PADL is not the fastest training procedure among all baselines, it remains within a practical wall-clock range while producing the smallest activated dictionary and the lowest RMSE in this comparison. Detailed training-time and warm-up-cost studies are provided in Appendices~\ref{app:training_time_atoms} and~\ref{app:warmup_cost}.

\subsection{Ablation Studies}

\begin{figure}
    \centering
    \includegraphics[width=0.6\linewidth]{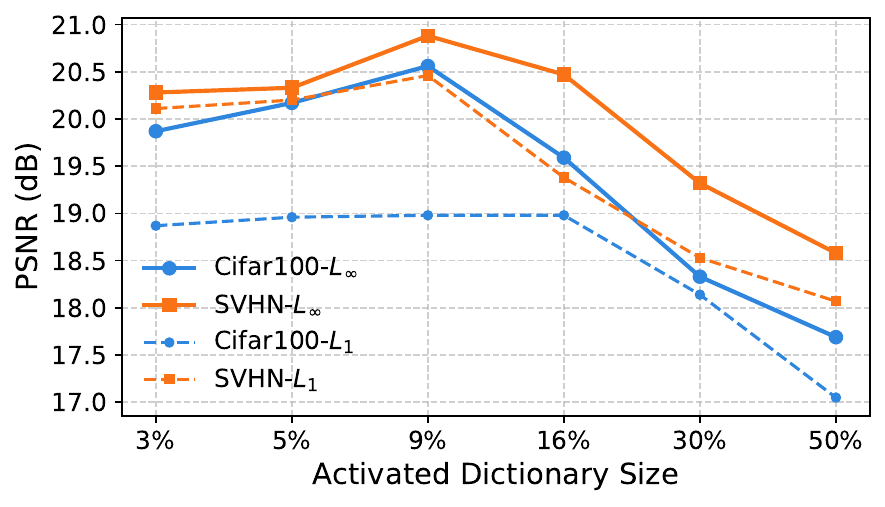}
    \caption{PSNR comparison on CIFAR-100 and SVHN under varying dictionary activation levels.}
    \label{fig:sparsity_sensitivity}
\end{figure}

We conducted ablation studies to evaluate the contribution of each component in our framework. Table~\ref{tab:ablation_study} shows the performance of different configurations. Figure~\ref{fig:sparsity_sensitivity} shows the sensitivity of reconstruction error to sparsity variations. Compared with the $L_1$-regularized approach, PADL achieves better reconstruction performance while using fewer activated dictionary atoms. The results show that both regularization terms contribute significantly to the performance, with the $\ell_\infty$ term providing particularly important theoretical properties.

\subsection{VLM Compression Results}

We further evaluate PADL as a visual compression module for image and video VLMs. Table~\ref{tab:img_qa_app} reports image VQA results. Compared with the original LLaVA-1.5 backbone, PADL achieves substantial visual representation compression while largely preserving downstream performance. With adapter fine-tuning, PADL+ recovers or improves performance on several benchmarks. Table~\ref{tab:video_qa_app} reports video QA results. PADL achieves a $16\times$ representation-size reduction, and PADL+ further improves downstream reasoning performance after adapter fine-tuning. Detailed formulation, compression settings, and threshold validation are
provided in Appendix~\ref{app:vlm_experiments}.

\begin{table*}[t]
\centering
\small
\setlength{\tabcolsep}{6pt}
\renewcommand{\arraystretch}{1.15}
\caption{Performance comparison on image VQA benchmarks. PADL is used as a visual-representation compression module. PADL denotes the version without adapter fine-tuning, while PADL+ denotes the version with adapter fine-tuning.}
\label{tab:img_qa_app}
\begin{tabular}{l l| c c c | c c c}
\toprule
Method & LLM  & VQA$^{v2}$ & SQA$^{I}$ & VQA$^{T}$ & POPE & MME & MMB \\
\midrule
InstructBLIP      & Vicuna-7B  & -    & 60.5 & 50.1 & -    & -      & 36.0 \\
InstructBLIP      & Vicuna-13B & -    & 63.1 & 50.7 & 78.9 & 1212.8 & - \\
Shikra            & Vicuna-13B & 77.4 & -    & -    & -    & -      & 58.8 \\
IDEFICS-9B        & LLaMA-7B   & 50.9 & -    & 25.9 & -    & -      & 48.2 \\
IDEFICS-80B       & LLaMA-65B  & 60.0 & -    & 30.9 & -    & -      & 54.5 \\
Qwen-VL           & Qwen-7B    & \textbf{78.8} & 67.1 & \textbf{63.8} & -    & -      & 38.2 \\

\midrule
LLaVA-1.5         & Vicuna-7B  & 78.5 & 66.8 & 58.2 & 85.9 & \textbf{1510.7} & 64.3 \\
LLaVA-1.5 + PruMerge & Vicuna-7B  & 76.6 & 67.5 & 55.6 & 86.5 & 1414.0 & 62.9 \\
LLaVA-1.5 + FreePruner & Vicuna-7B & 77.6 & 68.6 & 60.0 & 87.7 & 1485.2 & 63.8 \\
\rowcolor{gray!12}
LLaVA-1.5 + \textbf{PADL} ($8\times$) & Vicuna-7B & 76.9 & 65.5 & 57.7 & 84.6 & 1473.1 & 64.1 \\
\rowcolor{gray!12}
LLaVA-1.5 + \textbf{PADL+} ($8\times$) & Vicuna-7B & 78.2 & \textbf{68.8} & 61.0 & \textbf{88.5} & 1485.1 & \textbf{64.7} \\
\bottomrule
\end{tabular}
\end{table*}

\begin{table*}[t]
\centering
\small
\setlength{\tabcolsep}{5pt}
\renewcommand{\arraystretch}{1.2}
\caption{Performance comparison on video reasoning benchmarks. }
\label{tab:video_qa_app}
\resizebox{\textwidth}{!}{
\begin{tabular}{lcccccccccc}
\toprule
\textbf{Methods} & \textbf{LLM size}
& \multicolumn{2}{c}{\textbf{MSVD-QA}}
& \multicolumn{2}{c}{\textbf{MSRVTT-QA}}
& \multicolumn{2}{c}{\textbf{ActivityNet-QA}}
& \multicolumn{2}{c}{\textbf{TGIF-QA}} \\
\cmidrule(lr){3-4}
\cmidrule(lr){5-6}
\cmidrule(lr){7-8}
\cmidrule(lr){9-10}
&  & \textbf{Accuracy} & \textbf{Score}
& \textbf{Accuracy} & \textbf{Score}
& \textbf{Accuracy} & \textbf{Score}
& \textbf{Accuracy} & \textbf{Score} \\
\midrule
VideoChat & 7B & 56.3 & 2.8 & 45.0 & 2.5 & -- & 2.2 & 34.4 & 2.3 \\
LLaMA-Adapter & 7B & 54.9 & 3.1 & 43.8 & 2.7 & 34.2 & 2.7 & - & - \\
Video-ChatGPT & 7B & 64.9 & 3.3 & 49.3 & 2.8 & 35.2 & 2.7 & 51.4 & 3.0 \\
\midrule
Video-LLaVA & 7B & 70.7 & 3.9 & 59.2 & 3.5 & 45.3 & 3.3  & 70.0 & 4.0 \\
Video-LLaVA + PruMerge & 7B & 71.1 & 3.9 & 58.4 & 3.5 & 48.3 & 3.4 & - & - \\
Video-LLaVA + FreePruner & 7B & 71.3 & 3.9 & 59.5 & 3.5 & 48.4 & 3.4 & - & - \\
\rowcolor{gray!12}
Video-LLaVA + \textbf{PADL} ($16\times$) & 7B & 72.3 & 3.9 & 61.1 & 3.5 & 48.1 & 3.4 & 69.9 & 4.1 \\
\rowcolor{gray!12}
Video-LLaVA + \textbf{PADL+} ($16\times$) & 7B & \textbf{75.2} & \textbf{4.1} & \textbf{64.1} & \textbf{3.7} & \textbf{49.7} & \textbf{3.4} & \textbf{75.3} & \textbf{4.3} \\
\bottomrule
\end{tabular}
}
\end{table*}

\section{Conclusion}

In this work, we revisit PADL, a dictionary-learning formulation for balancing sparsity, storage, and accuracy, through a structured probabilistic perspective. By connecting the PADL objective to an auxiliary latent-variable MAP-induced formulation, we obtain interpretable parameter-calibration rules, suppression conditions, activated-dictionary-size estimates, reconstruction-error bounds, and a solvable activation-threshold condition. Compared with dictionary-learning analyses that focus primarily on reconstruction accuracy and coefficient sparsity, this view explicitly accounts for the activated dictionary size, which directly corresponds to storage cost. The resulting calibration procedure is simple to implement and reduces reliance on exhaustive hyperparameter search. Experiments on classical image reconstruction benchmarks validate the predicted sparsity--storage--accuracy behavior, and additional VLM experiments suggest that the same principle can be useful for sparse visual-representation compression.

\bibliographystyle{plainnat}
\bibliography{refs}

\appendix


\section{Related Works}

\subsection{Dictionary Learning and Sparse Representation}

Traditional dictionary learning aims to represent data using a small number of dictionary atoms while preserving reconstruction fidelity. Sparse coding and the seminal work of Olshausen and Field~\cite{olshausen1997sparse} established this formulation as a central model for sparse representation, and subsequent algorithms such as K-SVD~\cite{aharon2006k} and online dictionary learning~\cite{mairal2009online} made dictionary learning scalable to larger datasets. These classical methods primarily control sparsity at the coefficient level, typically through explicit sparsity constraints or $\ell_1$-type regularization. In this paper, we focus on a different but related quantity: the number of dictionary atoms that are effectively activated across a dataset, which is directly tied to storage and deployment cost.

From the probabilistic perspective, dictionary learning is formulated through generative models over observations, latent coefficients, and sometimes latent usage variables. Bayesian sparse coding and nonparametric dictionary-learning models use such variables to infer sparsity patterns and dictionary size from data~\cite{paisley2009nonparametric,zhou2009non,dang2017indian,dang2018towards}. Closely related models such as binary sparse coding~\cite{henniges2010binary} and discrete sparse coding~\cite{exarchakis2017discrete} introduce discrete latent variables to represent sparse activations. These formulations provide a natural way to reason about posterior uncertainty, latent dimensionality, sample complexity, and generalization~\cite{paisley2009nonparametric,zhou2009non,dang2017indian}. In these probabilistic formulations, the latent usage variables are commonly part of the reconstruction mechanism, for example by gating sample-level coefficients or factors~\cite{paisley2009nonparametric,zhou2009non,dang2017indian,henniges2010binary,exarchakis2017discrete}. Our formulation instead introduces an auxiliary atom-level activation variable that does not directly participate in reconstruction, but induces a row-wise activation prior corresponding to cross-sample atom usage.

From the optimization perspective, dictionary learning is commonly posed as a regularized reconstruction problem. This view leads to efficient objectives and algorithms, and it supports theoretical analyses of sparse recovery, stability, identifiability, and convergence~\cite{chen2001atomic,papyan2017working,papyan2018theoretical}. Classical sparse coding, K-SVD, and online dictionary learning fall into this category~\cite{olshausen1997sparse,aharon2006k,mairal2009online}. Their regularizers are usually imposed on individual coefficients or sample-level sparse codes, making the learned representation compact but not explicitly controlling how many atoms are activated across the full dataset. As a result, the regularization weights that govern sparsity and dictionary usage are typically selected as external hyperparameters.

Structured sparse modeling extends this optimization view by introducing group or joint sparsity constraints. Group Lasso and simultaneous sparse approximation encourage structured selection across predefined groups or multiple related signals~\cite{yuan2006model,tropp2006algorithms}. Bayesian group-sparse modeling provides probabilistic interpretations of group selection~\cite{babacan2014bayesian}, and recent dictionary-coding work further explores sparsity and class-sparsity priors for discarding irrelevant dictionary components~\cite{bocchinfuso2023bayesian}. These works are closely related in that they impose structure beyond element-wise sparsity. The constraint considered here is positioned differently: it is atom-wise but defined across samples, so that the row of coefficients associated with an atom determines whether that atom contributes to the effective dictionary size.

PADL is designed to connect these two perspectives. On the optimization side, it gives a tractable objective with reconstruction, coefficient sparsity, and cross-sample atom activation terms. On the probabilistic side, it admits an auxiliary-latent MAP formulation in which the activation variable induces the row-wise regularizer without gating the reconstruction likelihood. This connection allows us to retain the computational form of regularized dictionary learning while deriving quantities that are more naturally obtained from a probabilistic view, including expected activated dictionary size, generalization bounds, and a data-driven activation-threshold rule.
\begin{table*}[t]
\centering
\footnotesize
\setlength{\tabcolsep}{4pt}
\renewcommand{\arraystretch}{1.15}
\caption{High-level comparison of representative dictionary-learning formulations.}
\label{tab:related_work_comparison}
\begin{tabular}{p{2cm} p{4cm} p{6cm}}
\toprule
Perspective & Representative works & Relation to PADL \\
\midrule
Optimization-based
& Sparse coding, K-SVD, online DL~\cite{olshausen1997sparse,aharon2006k,mairal2009online}
& Optimizes reconstruction with coefficient sparsity; PADL further regularizes atom activation across samples. \\
\midrule
Probabilistic
& Bayesian sparse coding, IBP/Beta-process DL, binary/discrete sparse coding~\cite{paisley2009nonparametric,zhou2009non,dang2017indian,dang2018towards,henniges2010binary,exarchakis2017discrete}
& Provides generative views of sparsity and dictionary size; PADL uses an auxiliary activation variable to induce a row-wise MAP regularizer. \\
\midrule
Global-constraint
& Group Lasso, simultaneous sparse approximation, Bayesian group sparsity~\cite{yuan2006model,tropp2006algorithms,babacan2014bayesian,bocchinfuso2023bayesian}
& Models group or shared-support structure; PADL focuses on atom-wise activation across the dataset. \\
\midrule
\textbf{PADL}
& This work
& Connects a storage-aware optimization objective with an auxiliary-latent probabilistic formulation. \\
\bottomrule
\end{tabular}
\end{table*}

\subsection{Compositional Learning Methods on VLM}

As a broader interpretation of dictionary learning, recent work on VLMs has increasingly emphasized compositional representations, where complex visual or multimodal inputs are decomposed into a small set of reusable primitives. UniCode~\cite{zheng2024unicode} proposes a unified codebook that maps heterogeneous modalities into a shared discrete space, enabling cross-modal compositionality and efficient reuse. SAVE~\cite{park2025save} leverages sparse autoencoders to extract semantically meaningful visual factors, showing that structured sparsity can mitigate object hallucination by enforcing interpretable latent components. HybridToken-VLM and HTC-VLM~\cite{zhang2025hybridtoken} compress visual prefixes through hybrid token representations, combining semantic abstraction with token-level efficiency. These methods demonstrate that replacing dense visual streams with structured, compositional units can significantly improve both efficiency and robustness in VLMs. However, existing approaches typically introduce such structure heuristically or implicitly, without a principled mechanism to control the tradeoff among representation sparsity, dictionary size, and reconstruction fidelity. Our work complements these efforts by providing a probabilistically grounded dictionary learning framework that makes this tradeoff explicit and optimizable, enabling compositional compression in VLMs with theoretical guarantees.

\section{Proof of MAP Objective}
\label{ap:map_obj}
This appendix provides the full derivation of the MAP objective in
Proposition~\ref{prop:map_objective}. We show that, under the
probabilistic model, maximum a posteriori (MAP)
estimation of $(D,R)$ yields the deterministic objective consisting of a
quadratic reconstruction term, an $\ell_1$ penalty on coefficients, and
a row-wise $\ell_\infty$ activation penalty.

\subsection{Posterior Energy}

Recall the joint distribution
\[
P(X,R,z;D)
=
\prod_{i=1}^N P(X_i\mid R_i;D)
\cdot
\prod_{k=1}^K\prod_{i=1}^N p(R_{ki})
\cdot
\prod_{k=1}^K P(z_k\mid r_k),
\]
where $r_k=(R_{k1},\dots,R_{kN})$ is the $k$-th row of $R$.
Taking the negative log-posterior and using the inactive event $z_k=0$
for the storage term gives
\begin{align}
-\log P(R,z \mid X; D)
&=
\sum_{i=1}^N \frac{1}{2}\|X_i - D R_i\|_2^2
-
\sum_{k,i} \log p(R_{k i})
-
\sum_{k} \log P(z_k=0 \mid r_k)
+ C,
\label{eq:post_energy}
\end{align}
where $C$ is a constant independent of $(R,z)$.

In the following, we analyze the two prior-induced terms and show that
they correspond to $\ell_1$ and $\ell_\infty$-type regularization.

\subsection{Beta Prior Induces $\ell_1$ Regularization}

\begin{lemma}[Beta Prior and $\ell_1$ Regularization]
\label{lem:beta_l1_app}
Suppose $R_{k i} \sim \mathrm{Beta}(1,\beta)$ with $\beta>1$ and that
$R_{k i}$ are in the small-coefficient regime ($R_{k i}\ll 1$). Then
\begin{equation}
-\sum_{k,i} \log p(R_{k i})
\;\approx\;
\text{\emph{const}} + (\beta-1)\,\|R\|_1,
\end{equation}
where $\|R\|_1=\sum_{k,i}|R_{k i}|$ and the constant term does not depend
on $R$.
\end{lemma}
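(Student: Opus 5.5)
The argument is a direct computation from the Beta$(1,\beta)$ density followed by a first-order Taylor expansion. Since $R\in[0,1]^{K\times N}$, we have $|R_{ki}|=R_{ki}$, so no sign issues arise.

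First, I take the negative log of the density $p(R_{ki})=\beta(1-R_{ki})^{\beta-1}$ entrywise:
\[
-\log p(R_{ki}) = -\log\beta-(\beta-1)\log(1-R_{ki}).
\]
Summing over all $K N$ entries, the first piece contributes the $R$-independent constant $-KN\log\beta$. The second piece is $-(\beta-1)\sum_{k,i}\log(1-R_{ki})$.

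Next, I use the small-coefficient regime. Expanding $-\log(1-u)=u+u^2/2+O(u^3)$ gives $-(\beta-1)\log(1-R_{ki}) = (\beta-1)R_{ki}+O(R_{ki}^2)$. Summing and using nonnegativity identifies $\sum_{k,i}R_{ki}$ with $\|R\|_1$. This yields the stated approximation, with $\lambda_1=\beta-1$ matching Proposition~\ref{prop:map_objective}. Because $\beta>1$, the coefficient is positive, so the induced term is a genuine sparsity-promoting penalty.

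The only delicate point is making precise the meaning of ``$\approx$''. I would quantify the remainder using the elementary bound $u\le -\log(1-u)\le u/(1-u)$ for $u\in[0,1)$. This gives
\[
(\beta-1)\|R\|_1\;\le\; -\sum_{k,i}\log p(R_{ki})-\text{const}\;\le\;(\beta-1)\sum_{k,i}\frac{R_{ki}}{1-R_{ki}}.
\]
Hence the relative error is at most $\max_{k,i}R_{ki}/(1-\max_{k,i}R_{ki})$, which vanishes in the regime $R_{ki}\ll1$. This justifies dropping the higher-order terms. The linearized expression is also a convex lower bound on the exact penalty, which supports using it as the practical surrogate.
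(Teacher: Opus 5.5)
Your proposal is correct and follows the paper's proof. Both take $-\log$ of the $\mathrm{Beta}(1,\beta)$ density entrywise, linearize $\log(1-R_{ki})\approx -R_{ki}$, and sum to obtain $-KN\log\beta+(\beta-1)\|R\|_1$. Your two-sided bound $u\le -\log(1-u)\le u/(1-u)$ is a sound refinement that the paper omits: it makes the ``$\approx$'' precise, with relative error at most $\max_{k,i}R_{ki}/(1-\max_{k,i}R_{ki})$.
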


\begin{proof}
For each coefficient,
\[
p(R_{k i})=\beta(1-R_{k i})^{\beta-1},
\]
and hence
\[
-\log p(R_{k i})
=
-\log\beta-(\beta-1)\log(1-R_{k i}).
\]
For $R_{k i}\ll 1$, we use the first-order Taylor approximation
$\log(1-R_{k i})\approx -R_{k i}$, yielding
\[
-\log p(R_{k i})
\approx
-\log\beta+(\beta-1)R_{k i}.
\]
Summing over all $(k,i)$ gives
\begin{align*}
-\sum_{k,i}\log p(R_{k i})
&\approx
-KN\log\beta
+
(\beta-1)\sum_{k,i}R_{k i} \\
&=
\text{const}+(\beta-1)\|R\|_1,
\end{align*}
which completes the proof.
\end{proof}

Lemma~\ref{lem:beta_l1_app} shows that the Beta prior asymptotically
induces an $\ell_1$ penalty with weight $\lambda_1=\beta-1$, promoting
per-sample sparsity.

\subsection{Activation Prior Induces a Row-wise $\ell_\infty$ Barrier}

Recall the activation model
\[
z_k \mid r_k \sim \operatorname{Bernoulli}
\left(
\frac{1}{1+\exp(-\gamma(\max_i R_{k i}-\delta))}
\right).
\]
Define the softplus function
\[
\phi_\gamma(u)=\log(1+e^{\gamma u}).
\]

\begin{lemma}[Activation Prior and $\ell_\infty$ Barrier]
\label{lem:act_linf_app}
Up to an additive constant independent of $R$,
\[
-\log P(z_k=0 \mid r_k)
\simeq
\phi_\gamma(\max_i R_{k i}-\delta).
\]
Moreover, as $\gamma\to\infty$, $\phi_\gamma$ converges to a hard barrier:
\[
\lim_{\gamma\to\infty}\phi_\gamma(u)=
\begin{cases}
0, & u<0,\\
\infty, & u>0.
\end{cases}
\]
\end{lemma}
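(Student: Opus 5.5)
The plan is to compute $P(z_k=0\mid r_k)$ directly from the Bernoulli specification in Eq.~\ref{eq:bernoulli_def} and then study the resulting function of $u$. Write $m_k:=\max_i R_{ki}$ and $u_k:=m_k-\delta$, and let $\sigma(t)=1/(1+e^{-t})$ denote the logistic function. By Eq.~\ref{eq:bernoulli_def}, the Bernoulli parameter is $P(z_k=1\mid r_k)=\sigma(\gamma u_k)$. Hence
\[
P(z_k=0\mid r_k)=1-\sigma(\gamma u_k)=\sigma(-\gamma u_k)=\frac{1}{1+e^{\gamma u_k}},
\]
using the identity $1-\sigma(t)=\sigma(-t)$. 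Taking negative logarithms gives
\[
-\log P(z_k=0\mid r_k)=\log\bigl(1+e^{\gamma u_k}\bigr)=\phi_\gamma(m_k-\delta).
\]
This is an exact identity, so the ``additive constant independent of $R$'' is in fact zero. It depends on $r_k$ only through $\max_i R_{ki}$, which is what makes it a row-wise peak (i.e.\ $\ell_\infty$-type) penalty in the energy \eqref{eq:post_energy}.

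For the limit statement, I would first record the elementary sandwich
\[
\max(0,\gamma u)\;\le\;\phi_\gamma(u)\;\le\;\max(0,\gamma u)+\log 2 .
\]
It follows from $e^{\max(0,t)}\le 1+e^{t}\le 2e^{\max(0,t)}$. The two cases of the limit are then read off separately.
\begin{itemize}
\item[] If $u<0$, then $e^{\gamma u}\to 0$ as $\gamma\to\infty$, so $\phi_\gamma(u)=\log(1+e^{\gamma u})\to\log 1=0$.
\item[] If $u>0$, then $\phi_\gamma(u)\ge\gamma u\to\infty$.
\end{itemize}
At the boundary $u=0$ the value is identically $\log 2$. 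This is consistent with the piecewise statement, which leaves $u=0$ unspecified.

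I would add a remark that the convergence is monotone in $\gamma$ on each side, since $\phi_\gamma(u)$ is decreasing in $\gamma$ for $u<0$ and increasing for $u>0$. Consequently, the penalty $\lambda_2\sum_k\phi_\gamma(\max_i R_{ki}-\delta)$ converges to the indicator barrier of the constraint $\max_i R_{ki}\le\delta$.

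There is no real technical obstacle; the only delicate point is bookkeeping of the sign convention. The logistic expression in Eq.~\ref{eq:bernoulli_def} is the probability of the \emph{active} event $z_k=1$, whereas the energy uses the inactive event $z_k=0$. One must therefore pass through $1-\sigma(t)=\sigma(-t)$ to obtain $+\gamma u$, rather than $-\gamma u$, inside the softplus. Getting this sign wrong would produce a reward rather than a barrier for large peak activations.
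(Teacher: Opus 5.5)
Your proposal is correct and follows the paper's proof essentially step for step. You compute $P(z_k=0\mid r_k)=1/(1+e^{\gamma(\max_i R_{ki}-\delta)})$ as the complement of the Bernoulli parameter, take $-\log$ to get $\phi_\gamma$ exactly (so the constant is zero), and read off the limits separately for $u<0$ and $u>0$, with your sandwich bound only making the $u>0$ case explicit. One small caveat on your closing remark: at $\max_i R_{ki}=\delta$ the pointwise limit is $\lambda_2\log 2$ rather than $0$, so the penalty converges to the indicator barrier only away from the boundary.
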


\begin{proof}
We have
\[
P(z_k=1\mid r_k)
=
\frac{1}{1+\exp(-\gamma(\max_i R_{k i}-\delta))},
\]
and hence
\[
P(z_k=0\mid r_k)
=
\frac{1}{1+\exp(\gamma(\max_i R_{k i}-\delta))}.
\]
Therefore,
\[
-\log P(z_k=0\mid r_k)
=
\log\!\big(1+\exp(\gamma(\max_i R_{k i}-\delta))\big)
=
\phi_\gamma(\max_i R_{k i}-\delta).
\]
For $u<0$, $\gamma u\to -\infty$ as $\gamma\to\infty$, so
$\phi_\gamma(u)\to 0$. For $u>0$, $\gamma u\to +\infty$ and
$\phi_\gamma(u)\to\infty$. This establishes the barrier
behavior.
\end{proof}

Lemma~\ref{lem:act_linf_app} shows that the activation prior penalizes the
row-wise peak $\max_i R_{k i}$ and converges to a hard constraint
$\max_i R_{k i}\le\delta$ as $\gamma\to\infty$. This corresponds to a
row-wise $\ell_\infty$ regularization on $R$.

\subsection{Derivation of the MAP Objective}

Substituting Lemmas~\ref{lem:beta_l1_app} and~\ref{lem:act_linf_app} into
the posterior energy~\ref{eq:post_energy}, and setting the noise scale
to $\sigma=1$ by normalization, we obtain the MAP objective
\[
\min_{D,R}
\;
\|X-DR\|_F^2
+
(\beta-1)\|R\|_1
+
\sum_{k=1}^K
\phi_\gamma(\max_i R_{k i}-\delta).
\]
Writing $\lambda_1=\beta-1$ and introducing a scaling constant
$\lambda_2>0$ for the activation prior yields
\[
\min_{D,R}
\;
\|X-DR\|_F^2
+
\lambda_1\|R\|_1
+
\lambda_2\sum_{k=1}^K
\phi_\gamma(\max_i R_{k i}-\delta),
\]
which corresponds to Eq.~\ref{eq:novel-obj-main} in the main text. The soft barrier penalizes the row-wise peak $\max_iR_{ki}$ and, for large $\gamma$, approaches a hard threshold constraint at $\delta$. The computational objective in Eq.~\ref{eq:novel-obj-hard} uses the row-wise $\ell_\infty$ term as a convex activation surrogate for this peak-activation cost, thereby completing the proof of Proposition~\ref{prop:map_objective}.

\section{Technical Lemmas for Tradeoff Characterization}

This appendix provides the technical foundations for the tradeoff
analysis. We establish sufficient conditions under which
non-essential atoms are suppressed, derive a high-probability upper
bound on the aggregate activation mass of an atom, and combine these
results to obtain a reconstruction error bound.

Throughout this section, let $r_k=(R_{k1},\dots,R_{kN})$ denote the
coefficient row corresponding to atom $d_k$, and define the objective
\[
f(D,R)
=
\|X - D R\|_F^2
+
\lambda_1 \|R\|_1
+
\lambda_2\sum_{k=1}^K \|r_k\|_\infty ,
\]
where $\|r_k\|_\infty=\max_i |R_{k i}|$.

\subsection{Sufficient Condition for Atom Suppression}
\label{ap:sufficient_cond}

\begin{lemma}[Sufficient Condition for Atom Suppression]
\label{lem:suppress_app}
Assume that a subset of atoms $S$ satisfies
\[
\|X - D_S R_S\|_F^2 \le \varepsilon,
\]
where $D_S$ and $R_S$ denote the dictionary and coefficients restricted
to $S$. For an atom $k\notin S$, let $r_k$ be its coefficient row and
define the residual $e = X - D_S R_S$. Suppose $\|d_k\|_2=1$ and that
\[
|e_i^\top d_k| \le \eta
\quad\text{for all } i.
\]
If $r_k\neq 0$ and
\begin{equation}
\label{eq:suppress_cond_app}
\lambda_1
+
\lambda_2\frac{\|r_k\|_\infty}{\|r_k\|_1}
\;>\;
2\eta,
\end{equation}
then setting $r_k=0$ strictly decreases the objective, i.e.,
$f(D,R_{-k})-f(D,R)<0$. In particular, for an atom whose peak activation
satisfies $\|r_k\|_\infty\ge\delta$, it is sufficient that
\begin{equation}
\label{eq:suppress_cond_delta_app}
\lambda_1
+
\lambda_2\frac{\delta}{\|r_k\|_1}
\;>\;
2\eta.
\end{equation}
\end{lemma}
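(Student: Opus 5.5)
The plan is to compare the objective value $f(D,R)$ with $f(D,R_{-k})$ directly. Here $R_{-k}$ is $R$ with the $k$-th row set to zero and every other entry unchanged. I will read the residual $e=X-D_SR_S$ as the residual of all atoms other than $k$: if atoms outside $S\cup\{k\}$ have nonzero rows, I absorb their contribution into $e$. The hypothesis $|e_i^\top d_k|\le\eta$ is then applied to that residual, and the $\ell_1$ and $\ell_\infty$ terms of rows other than $k$ cancel in the difference.

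\textbf{Step 1: fit term.} With atom $k$ present, the residual is $X-DR=e-d_kr_k$, so
\[
\|e-d_kr_k\|_F^2=\|e\|_F^2-2\sum_{i=1}^N R_{ki}\,e_i^\top d_k+\|d_k\|_2^2\|r_k\|_2^2 .
\]
Using $\|d_k\|_2=1$, removing the atom changes the fit term by
\[
\|e\|_F^2-\|e-d_kr_k\|_F^2=2\sum_i R_{ki}\,e_i^\top d_k-\|r_k\|_2^2 .
\]
By the hypothesis $|e_i^\top d_k|\le\eta$, this is at most $2\eta\|r_k\|_1-\|r_k\|_2^2\le 2\eta\|r_k\|_1$. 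Since $R\in[0,1]^{K\times N}$ we have $|R_{ki}|=R_{ki}$, although only the absolute-value bound is needed.

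\textbf{Step 2: regularizers.} Zeroing row $k$ removes exactly $\lambda_1\|r_k\|_1+\lambda_2\|r_k\|_\infty$ from the penalty and leaves all other penalty terms unchanged. Combining with Step 1,
\[
f(D,R_{-k})-f(D,R)\le 2\eta\|r_k\|_1-\|r_k\|_2^2-\lambda_1\|r_k\|_1-\lambda_2\|r_k\|_\infty .
\]
Since $r_k\neq0$ gives $\|r_k\|_1>0$, dividing by it shows the right-hand side is strictly negative whenever \eqref{eq:suppress_cond_app} holds. The dropped term $-\|r_k\|_2^2$ only strengthens the strict inequality.

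\textbf{Step 3: threshold version.} The left-hand side of \eqref{eq:suppress_cond_app} is increasing in $\|r_k\|_\infty$. So if $\|r_k\|_\infty\ge\delta$, then \eqref{eq:suppress_cond_delta_app} implies \eqref{eq:suppress_cond_app}, and the first part applies.

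The only delicate point is the bookkeeping in Steps 1 and 2, namely making sure that $e$ is the residual with row $k$ removed and everything else held fixed. The quadratic expansion and the H\"older-type bound $\sum_i R_{ki}e_i^\top d_k\le\eta\|r_k\|_1$ are routine. The bound $\|X-D_SR_S\|_F^2\le\varepsilon$ is not needed in this argument.
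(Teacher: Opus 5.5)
Your proof is correct and follows the paper's argument essentially step for step: you expand $\|e-d_kr_k^\top\|_F^2$ using $\|d_k\|_2=1$, bound the cross term by $2\eta\|r_k\|_1$ after dropping $-\|r_k\|_2^2$, subtract the removed penalties $\lambda_1\|r_k\|_1+\lambda_2\|r_k\|_\infty$, and obtain the $\delta$-version from $\|r_k\|_\infty\ge\delta$. Two of your remarks are accurate clarifications rather than departures from the paper: that $e$ must be read as the residual of all atoms other than $k$ (the paper silently assumes $X-DR=e-d_kr_k^\top$), and that the $\varepsilon$ bound is never used.
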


\begin{proof}
Let $R_{-k}$ denote the coefficient matrix obtained by zeroing the
$k$-th row of $R$, and define $\Delta f := f(D,R_{-k}) - f(D,R)$.

\paragraph{Reconstruction term.}
Using $X-DR = e - d_k r_k^\top$ and $\|d_k\|_2=1$, we have
\[
\|X-DR\|_F^2
=
\|e\|_F^2
-2\sum_{i=1}^N e_i^\top d_k\,R_{k i}
+
\|r_k\|_2^2.
\]
Therefore,
\[
\Delta_{\mathrm{rec}}
=
\|e\|_F^2-\|X-DR\|_F^2
=
2\sum_{i=1}^N e_i^\top d_k\,R_{k i}
-
\|r_k\|_2^2
\le
2\eta\,\|r_k\|_1,
\]
where the negative quadratic term has been dropped to obtain a valid
upper bound.

\paragraph{Regularization term.}
Zeroing $r_k$ reduces the $\ell_1$ penalty by $\lambda_1\|r_k\|_1$ and
reduces the row-wise $\ell_\infty$ penalty by
$\lambda_2\|r_k\|_\infty$. Hence
\[
\Delta_{\mathrm{reg}}
=
-\lambda_1\|r_k\|_1
-
\lambda_2\|r_k\|_\infty.
\]

\paragraph{Combining terms.}
Combining the two bounds gives
\[
\Delta f
\le
(2\eta-\lambda_1)\|r_k\|_1
-
\lambda_2\|r_k\|_\infty.
\]
Thus $\Delta f<0$ whenever Eq.~\ref{eq:suppress_cond_app} holds. If
$\|r_k\|_\infty\ge\delta$, then
$\|r_k\|_\infty/\|r_k\|_1\ge\delta/\|r_k\|_1$, and the simpler
threshold-dependent condition in Eq.~\ref{eq:suppress_cond_delta_app}
implies Eq.~\ref{eq:suppress_cond_app}.
\end{proof}

\subsection{Expected Dictionary Activation Size}
\label{ap:exp_s}
For each atom $k$, define the indicator of being active under threshold
$\delta$:
\[
\mathbf{1}_k(\delta)
\;:=\;
\mathbb{I}\!\left\{\max_{1\le i\le N} R_{k i}>\delta\right\}.
\]
Then the active set size satisfies
\[
|S(\delta)|=\sum_{k=1}^K \mathbf{1}_k(\delta),
\qquad
\Rightarrow\quad
\mathbb{E}|S(\delta)|
=\sum_{k=1}^K \mathbb{E}\,\mathbf{1}_k(\delta).
\]
By identical distribution across $k$, it suffices to compute
$\mathbb{E}\,\mathbf{1}_1(\delta)=\Pr(\max_i R_{1i}>\delta)$.

Since $\{R_{1i}\}_{i=1}^N$ are i.i.d.\ with CDF $F$, we have
\[
\Pr\!\left(\max_{1\le i\le N} R_{1i}\le \delta\right)
=
\prod_{i=1}^N \Pr(R_{1i}\le \delta)
=
F(\delta)^N.
\]
Therefore,
\[
\Pr\!\left(\max_{1\le i\le N} R_{1i}>\delta\right)
=
1-F(\delta)^N,
\]
and hence
\[
\mathbb{E}|S(\delta)|
=
\sum_{k=1}^K \Pr\!\left(\max_{1\le i\le N} R_{k i}>\delta\right)
=
K\bigl(1-F(\delta)^N\bigr).
\]
Substituting $F(\delta)=1-(1-\delta)^\beta$ yields
\[
\mathbb{E}|S(\delta)|
=
K\left[1-\bigl(1-(1-\delta)^\beta\bigr)^N\right],
\]
which proves~\ref{eq:ES_delta_app}.

To show monotonicity and smoothness, note that for $\beta>0$,
$F(\delta)$ is smooth and strictly increasing on $(0,1)$ with density
\[
f(\delta)=F'(\delta)=\beta(1-\delta)^{\beta-1}>0.
\]
Thus, $\mathbb{E}|S(\delta)|$ is smooth on $(0,1)$ and
\[
\frac{d}{d\delta}\mathbb{E}|S(\delta)|
=
-K\,N\,F(\delta)^{N-1} f(\delta)
<0,
\]
so it is strictly decreasing in $\delta$.

Finally, the endpoint values follow immediately:
$F(0)=0 \Rightarrow \mathbb{E}|S(0)|=K(1-0)=K$, and
$F(1)=1 \Rightarrow \mathbb{E}|S(1)|=K(1-1)=0$.

\subsection{High-Probability Reconstruction Error Bound}
\label{ap:err_bound}
\begin{lemma}[High-Probability Upper Bound on $\|r_k\|_1$]
\label{lem:l1_upper_app}
Assume $R_{k i} \overset{\mathrm{i.i.d.}}{\sim} \mathrm{Beta}(1,\beta)$
with $\beta>1$. Then for any $\tau\in(0,1)$, with probability at least
$1-\tau$,
\begin{equation}
\label{eq:l1_upper_app}
\|r_k\|_1
\le
U(N,\beta,\tau)
:=
\frac{N}{1+\beta}
+
\sqrt{\frac{N}{2}\log\frac{1}{\tau}}.
\end{equation}
\end{lemma}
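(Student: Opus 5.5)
The bound is a direct application of Hoeffding's inequality to the sum $\|r_k\|_1=\sum_{i=1}^N R_{ki}$. Since each $R_{ki}\in[0,1]$, we have $|R_{ki}|=R_{ki}$, so $\|r_k\|_1$ is a sum of $N$ i.i.d.\ bounded random variables.

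The first step is to compute the mean. For $\mathrm{Beta}(1,\beta)$ the mean is $1/(1+\beta)$; alternatively, integrate the tail $\int_0^1 (1-r)^\beta\,dr = 1/(\beta+1)$. Hence
\[
\mathbb{E}\|r_k\|_1=\frac{N}{1+\beta},
\]
which is the first term of $U(N,\beta,\tau)$.

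Next we apply the one-sided Hoeffding inequality for independent variables taking values in $[0,1]$. This gives, for every $t>0$,
\[
\Pr\!\left(\sum_{i=1}^N R_{ki}-\frac{N}{1+\beta}\ge t\right)\le \exp\!\left(-\frac{2t^2}{N}\right).
\]
Setting the right-hand side equal to $\tau$ and solving yields $t=\sqrt{\tfrac{N}{2}\log\tfrac{1}{\tau}}$. Therefore, with probability at least $1-\tau$,
\[
\|r_k\|_1\le \frac{N}{1+\beta}+\sqrt{\frac{N}{2}\log\frac{1}{\tau}}=U(N,\beta,\tau).
\]

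There is no real obstacle in this argument. The only points to check are that the support is $[0,1]$, so each summand has range width $1$ in Hoeffding's inequality, and that the statement is for a single fixed $k$. A bound holding simultaneously for all $K$ atoms would need a union bound, replacing $\tau$ by $\tau/K$, but that is not required here.

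The assumption $\beta>1$ plays no role in the argument; the bound holds for any $\beta>0$. A sharper Bernstein-type bound using the variance $\beta/((1+\beta)^2(2+\beta))$ is available but unnecessary, since the statement asks only for the Hoeffding form.
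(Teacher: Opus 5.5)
Your proposal is correct and matches the paper's proof: compute $\mathbb{E}\|r_k\|_1 = N/(1+\beta)$, apply the one-sided Hoeffding inequality for $[0,1]$-valued summands, and set $t=\sqrt{\tfrac{N}{2}\log\tfrac{1}{\tau}}$. Your side remarks are also accurate and go slightly beyond the paper: $\beta>1$ is not needed, and the bound is for a single fixed $k$, so a simultaneous statement over all atoms would require a union bound with $\tau/K$.
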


\begin{proof}
Each $R_{k i}$ lies in $[0,1]$ and satisfies
$\mathbb{E}[R_{k i}] = \frac{1}{1+\beta}$. Let
$S_k=\sum_{i=1}^N R_{k i}=\|r_k\|_1$. By Hoeffding's inequality,
\[
\Pr(S_k - \mathbb{E}[S_k] \ge t)
\le
\exp\!\left(-\frac{2t^2}{N}\right).
\]
Setting
$t=\sqrt{\frac{N}{2}\log\frac{1}{\tau}}$ yields
\[
\Pr\!\left(
S_k \ge \frac{N}{1+\beta} + \sqrt{\frac{N}{2}\log\frac{1}{\tau}}
\right)
\le \tau,
\]
which proves the claim.
\end{proof}

\begin{lemma}[High-Probability Reconstruction Bound]
\label{lem:err_bound_app}
Let $e=X-D_S R_S$ be the residual using the active atom set $S$, and
define
\[
\eta := \max_{i,k} |e_i^\top d_k|.
\]
Assume that every non-essential active candidate $k\notin S$ satisfies
$\|r_k\|_\infty\ge\delta$. Assume also that the active sub-dictionary
satisfies the restricted active-span condition
\begin{equation}
\label{eq:active_span_re_app}
\|D_S^\top e_i\|_2^2
\ge
\alpha_S\|e_i\|_2^2
\qquad\text{for all }i,
\end{equation}
for some $\alpha_S>0$. Then, for any $\tau\in(0,1)$, with probability at
least $1-\tau$,a sufficient condition for suppressing each fixed non-essential atom is
\begin{equation}
\label{eq:eta_bound_app}
2\eta
\le
\lambda_1
+
\frac{\lambda_2\delta}{U(N,\beta,\tau)}.
\end{equation}
Under this condition, the reconstruction error satisfies
\begin{equation}
\label{eq:err_bound_app}
\|e\|_F^2
\le
\frac{N}{4}\,
\frac{|S|}{\alpha_S}
\left(
\lambda_1
+
\frac{\lambda_2\delta}{U(N,\beta,\tau)}
\right)^2
\quad\text{with probability at least }1-\tau.
\end{equation}
\end{lemma}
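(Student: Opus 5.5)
The plan has two parts: first establish the suppression condition~\eqref{eq:eta_bound_app} from Lemma~\ref{lem:suppress_app} together with Lemma~\ref{lem:l1_upper_app}, then convert the resulting bound on $\eta$ into a Frobenius-norm bound on $e$ through the active-span condition~\eqref{eq:active_span_re_app}.

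\textbf{Suppression condition.} Fix a non-essential atom $k\notin S$. By Lemma~\ref{lem:l1_upper_app}, with probability at least $1-\tau$ we have $\|r_k\|_1\le U(N,\beta,\tau)$. By assumption $\|r_k\|_\infty\ge\delta$. On this event,
\[
\frac{\|r_k\|_\infty}{\|r_k\|_1}\ge \frac{\delta}{U(N,\beta,\tau)}.
\]
Hence $\lambda_1+\lambda_2\delta/U$ is a lower bound for the left-hand side of~\eqref{eq:suppress_cond_app}. Whenever $2\eta$ lies below this quantity, Lemma~\ref{lem:suppress_app} applies and zeroing $r_k$ strictly decreases the objective.

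There is a subtlety at the boundary. Equality in~\eqref{eq:eta_bound_app} only gives $\Delta f\le 0$. To recover a strict decrease I will either use the strict inequality, or keep the dropped term $-\|r_k\|_2^2<0$ from the proof of Lemma~\ref{lem:suppress_app}. Either way,~\eqref{eq:eta_bound_app} is a sufficient condition at a fixed $k$, valid with probability at least $1-\tau$, as stated.

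\textbf{Error bound.} Next I adopt~\eqref{eq:eta_bound_app} as a standing condition; this is how the lemma reads it, namely as the regime in which all extra atoms are pruned. The argument is then essentially deterministic.

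Since $\eta=\max_{i,k}|e_i^\top d_k|$, in particular $|d_j^\top e_i|\le\eta$ for every $j\in S$. Summing over $j\in S$ gives
\[
\|D_S^\top e_i\|_2^2=\sum_{j\in S}(d_j^\top e_i)^2\le |S|\,\eta^2.
\]
Combining this with~\eqref{eq:active_span_re_app} yields $\|e_i\|_2^2\le |S|\eta^2/\alpha_S$. Summing over the $N$ samples gives
\[
\|e\|_F^2\le N|S|\eta^2/\alpha_S.
\]
Finally, I substitute $\eta\le\tfrac12\bigl(\lambda_1+\lambda_2\delta/U\bigr)$, which produces the factor $\tfrac{N}{4}\tfrac{|S|}{\alpha_S}$ in~\eqref{eq:err_bound_app}. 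The probability $1-\tau$ is inherited from the single application of Lemma~\ref{lem:l1_upper_app}. Dividing by $N$ then gives Theorem~\ref{prop:err_bound} with $C(|S|)=|S|/(4\alpha_S)$, which is linear in $|S|$.

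\textbf{Main obstacle.} The logical status of~\eqref{eq:eta_bound_app} is the hard part. The derivation uses it as an upper bound on $\eta$, whereas Lemma~\ref{lem:suppress_app} only says that this condition implies suppression.

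I would handle this by stating explicitly that we work in the regime where~\eqref{eq:eta_bound_app} holds; equivalently, $\lambda_1,\lambda_2,\delta$ are chosen so that it holds. An alternative is a stationarity argument: at a minimizer where some atom outside $S$ is retained, the KKT conditions force $2|e_i^\top d_k|$ up to the penalty level. I would present the former reading and remark on the latter.

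A secondary point is the probability bound. If suppression is needed simultaneously for all $k\notin S$, a union bound replaces $\tau$ by $\tau/K$ in $U$. For a fixed atom, as the lemma is stated, no union bound is required.
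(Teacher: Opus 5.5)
Your proposal is correct and follows the paper's proof essentially step for step. Like the paper, you first combine Lemma~\ref{lem:suppress_app} with the Hoeffding bound of Lemma~\ref{lem:l1_upper_app} to get the suppression condition. You then use the active-span condition together with $|e_i^\top d_j|\le\eta$ for $j\in S$ to reach $\|e\|_F^2\le N|S|\eta^2/\alpha_S$, and substitute $\eta\le\tfrac12(\lambda_1+\lambda_2\delta/U)$. Two of your refinements tighten points the paper passes over silently: retaining the $-\|r_k\|_2^2$ term to keep the decrease strict in the equality case of~\eqref{eq:eta_bound_app}, and reading that condition as a standing assumption, which makes the error bound deterministic given it.
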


\begin{proof}
From Lemma~\ref{lem:suppress_app}, if $\|r_k\|_\infty\ge\delta$, a
sufficient condition for suppressing a non-essential atom $k\notin S$ is
\[
2\eta
\le
\lambda_1
+
\lambda_2\frac{\delta}{\|r_k\|_1}.
\]
By Lemma~\ref{lem:l1_upper_app}, with probability at least $1-\tau$,
$\|r_k\|_1\le U(N,\beta,\tau)$. Therefore
$\delta/\|r_k\|_1\ge \delta/U(N,\beta,\tau)$, and
Eq.~\ref{eq:eta_bound_app} is a sufficient condition.

For the geometric part, Eq.~\ref{eq:active_span_re_app} gives
\[
\alpha_S\|e_i\|_2^2
\le
\|D_S^\top e_i\|_2^2
=
\sum_{k\in S}(e_i^\top d_k)^2.
\]
Since $(e_i^\top d_k)^2 \le \eta^2$, we have
$\sum_{k\in S}(e_i^\top d_k)^2 \le |S|\eta^2$, hence
\[
\|e_i\|_2^2 \le \frac{|S|}{\alpha_S}\eta^2.
\]
Summing over all $i$ yields
\[
\|e\|_F^2
\le
N\,\frac{|S|}{\alpha_S}\,\eta^2.
\]
Substituting the bound on $\eta$ from~\ref{eq:eta_bound_app} gives
\ref{eq:err_bound_app}.
\end{proof}

\begin{remark}[Geometry of the active-span condition]
The condition in Eq.~\ref{eq:active_span_re_app} avoids requiring
$D_SD_S^\top$ to be full rank. If residuals are restricted to
$\operatorname{span}(D_S)$ and $D_S^\top D_S\succeq \alpha_S I$, then
Eq.~\ref{eq:active_span_re_app} holds. For arbitrary residuals, the same
argument controls the component projected onto the active span.
\end{remark}

\section{Scaling Behavior and Interpretation}
Recall from Lemma~\ref{lem:err_bound_app} that, with probability at least
$1-\tau$, the reconstruction error satisfies
\begin{equation}
\label{eq:appC_main}
\|e\|_F^2
\le
\frac{N}{4}\,
\frac{|S|}{\alpha_S}
\left(
\lambda_1
+
\frac{\lambda_2\delta}{U(N,\beta,\tau)}
\right)^2 ,
\end{equation}
where $S$ is the active atom set, $|S|$ its cardinality, and $\alpha_S$
is the restricted active-span constant from Eq.~\ref{eq:active_span_re_app}.

\subsection{Dependence on the Number of Samples}

For fixed $\beta$ and confidence level $\tau$,
\[
U(N,\beta,\tau)
=
\frac{N}{1+\beta}
+
\sqrt{\frac{N}{2}\log\frac{1}{\tau}}
\]
grows linearly with $N$ as $N\to\infty$. Consequently, the contribution
of the activation term inside the parentheses in
Eq.~\ref{eq:appC_main} decays as $\mathcal{O}(1/N)$, and the bound
asymptotically simplifies to
\begin{equation}
\frac{\|e\|_F^2}{N}
=
\mathcal{O}\!\left(
\frac{|S|}{\alpha_S}\,\lambda_1^2
\right).
\end{equation}
Thus, the reconstruction error \emph{per sample} remains bounded in the
large-sample regime.

Intuitively, as more visual samples become available, the aggregate
activation mass $\|r_k\|_1$ of each atom concentrates around its
expectation under the Beta prior. The threshold contribution
$\lambda_2\delta/U(N,\beta,\tau)$ therefore vanishes in the large-sample
regime, while the active set size is still controlled by the threshold
through $\mathbb{E}|S(\delta)|$.

\subsection{Roles of $\ell_1$ and $\ell_\infty$}

The two regularizers in the objective contribute in complementary ways.

The $\ell_1$ term determines the asymptotic error floor through
$\lambda_1$. It controls the average magnitude and count of coefficients
within each sample, thereby governing per-sample representation
complexity. In the large-$N$ regime, $\lambda_1$ becomes the dominant
factor shaping the reconstruction error.

In contrast, the row-wise $\ell_\infty$ term enters the suppression
condition through
\[
\frac{\|r_k\|_\infty}{\|r_k\|_1},
\]
and, for atoms whose peak exceeds the threshold, through the conservative
quantity $\delta/U(N,\beta,\tau)$. This ratio is large for atoms whose
usage is concentrated in a small number of samples, so such atoms are
more easily suppressed when they are weakly aligned with the residual.
Conversely, atoms that contribute frequently across samples incur only a
vanishing activation penalty as $N$ grows.

This mechanism formalizes the intended behavior of the combined
$\ell_1$--$\ell_\infty$ regularization: it favors atoms that provide
stable, compositional structure while eliminating peak-only or
idiosyncratic components. In the context of visual representations, the
model retains primitives that recur across many samples and discards
those that encode transient or sample-specific artifacts.

\subsection{Dependence on Dictionary Geometry}

The factor $|S|/\alpha_S$ in Eq.~\ref{eq:appC_main} captures the
effective model complexity. The cardinality $|S|$ measures how many
atoms are actually used, while $\alpha_S$ quantifies how well the active
span captures the residual directions being bounded.

Even with a large nominal dictionary, the bound depends only on the
\emph{active} subset and its restricted active-span conditioning. Poorly
conditioned or overly redundant selections (small $\alpha_S$) can
inflate the bound, whereas compact and well-spread subsets yield tighter
guarantees. This reveals that reconstruction accuracy is governed jointly
by statistical regularization and the geometry of the selected
representation.

\subsection{Optimal Activation Threshold $\delta^\star$}
\label{ap:opt_delta}
We can further obtain an explicit characterization of the optimal
activation threshold $\delta$ by minimizing a proxy of the theoretical
upper bound. The key observation is that $\delta$ controls the expected
active dictionary size through the Beta prior, while simultaneously
entering the reconstruction bound through the activation penalty term.

\paragraph{Active set size as a function of $\delta$.}
Define the active atom set induced by thresholding
\[
S(\delta) := \{ k \in [K] : \max_{1\le i\le N} R_{k i} > \delta \}.
\]
Under $R_{k i}\overset{i.i.d.}{\sim}\mathrm{Beta}(1,\beta)$, let
\[
F(\delta) := \Pr(R_{k i}\le \delta) = 1-(1-\delta)^\beta,
\qquad
f(\delta) := F'(\delta)=\beta(1-\delta)^{\beta-1}.
\]
Then
\begin{equation}
\label{eq:ES_delta_app}
\mathbb{E}|S(\delta)|
=
K\Big(1 - F(\delta)^N\Big),
\end{equation}
since $\Pr(\max_i R_{k i}\le \delta)=F(\delta)^N$.

\paragraph{A bound-driven objective.}
From Lemma~\ref{lem:err_bound_app}, the reconstruction error bound takes
the form
\[
\|e\|_F^2
\;\lesssim\;
\frac{N}{4}\,\frac{|S|}{\alpha_S}
\left(\lambda_1+
\frac{\lambda_2\delta}{U(N,\beta,\tau)}\right)^2.
\]
Treating $\alpha_S$ as approximately constant with respect to $\delta$
(standard in such proxy optimization), and replacing $|S|$ by its
expectation in~\ref{eq:ES_delta_app}, we minimize the scalar objective
\begin{equation}
\label{eq:phi_delta}
\Phi(\delta)
:=
\Big(1 - F(\delta)^N\Big)
\left(\lambda_1+
\frac{\lambda_2\delta}{U(N,\beta,\tau)}\right)^2,
\qquad
\delta\in[0,1].
\end{equation}
This objective explicitly captures the sparsity--storage--accuracy
tradeoff: $1-F(\delta)^N$ decreases with $\delta$ (fewer active atoms,
lower storage), while the factor
$\lambda_1+\lambda_2\delta/U(\cdot)$ increases with $\delta$ (stronger
activation constraint, potentially higher error).

\paragraph{Closed-form optimality condition.}
A stationary point $\delta^\star\in(0,1)$ of~\ref{eq:phi_delta} satisfies
\begin{equation}
\label{eq:delta_stationary_app}
\frac{N\,F(\delta)^{N-1}\,f(\delta)}{1-F(\delta)^N}
=
\frac{2\lambda_2}{U(N,\beta,\tau)\left(\lambda_1+
\frac{\lambda_2\delta}{U(N,\beta,\tau)}\right)}.
\end{equation}
The left-hand side is determined solely by the Beta prior and sample
count $N$, while the right-hand side depends on the regularization scale
$\lambda_1$, activation weight $\lambda_2$, and concentration term
$U(N,\beta,\tau)$. In practice, $\delta^\star$ can be obtained
efficiently by one-dimensional root finding (e.g., bisection or Newton's
method), yielding a stable and deterministic hyperparameter selection
rule.
\begin{remark}[Non-degenerate regime]
The bound in Proposition~\ref{prop:err_bound} is derived under the
restricted active-span condition in Eq.~\ref{eq:active_span_re_app}. When
$\delta\to1$, the active set $S(\delta)$ becomes empty and this condition
is violated, so the bound no longer applies. Therefore, the endpoint
$\delta=1$ corresponds to a degenerate regime and is excluded from the
feasible domain. The meaningful optimum lies in the interior
$\delta\in(0,1)$, where the stationary condition in
Eq.~\ref{eq:delta_stationary_app} characterizes the trade-off between
sparsity, storage, and accuracy.
\end{remark}

\subsection{Interpretation from an MDL Perspective}

From a Minimum Description Length (MDL) viewpoint, the objective
\[
\|X - D R\|_F^2
+
\lambda_1 \|R\|_1
+
\sum_{k=1}^K \|r_k\|_\infty
\]
can be interpreted as trading off three code lengths: the cost of
encoding the residual, the cost of encoding sparse coefficients, and
the cost of instantiating dictionary atoms.

The $\ell_1$ term penalizes the number and magnitude of nonzero
coefficients, corresponding to the description length of per-sample codes. The row-wise $\ell_\infty$ term penalizes whether an atom is used
at all, corresponding to the cost of introducing a new primitive into
the model. In this sense, the activation penalty performs model
selection over atoms, while the sparsity penalty performs compression
within the selected model.

The theory therefore shows that the PADL formulation can be interpreted as
Bayesian model selection under an auxiliary activation prior.
\section{VLM Inference Compression}
\label{app:vlm_experiments}

This section provides the detailed VLM formulation, compression protocol, and
supplementary validation for PADL-based visual-representation compression. The
main image and video results are reported in the main text, while the appendix
focuses on implementation details and threshold validation.

\subsection{VLM Integration}
\label{app:vlm_implementation}

The parsimoniously activated dictionary learning paradigm can be naturally integrated into VLM inference as a structured visual compression module. In modern multimodal systems, images and videos are encoded into long sequences of visual tokens that often contain substantial redundancy. These dense prefixes dominate both computation and communication cost, especially in edge-cloud deployment scenarios, where visual features must be transmitted to a remote language model.

Consider a standard VLM pipeline. An image
$\mathbf{I}\in\mathbb{R}^{C\times H\times W}$ or a video
$\mathbf{I}_{\text{vid}}\in\mathbb{R}^{N_t\times C\times H\times W}$ is
processed by a vision encoder that partitions each frame into
$N_s=\frac{H}{p}\times\frac{W}{p}$ non-overlapping patches and maps each
patch to a $d$-dimensional embedding. For frame $i$, this yields
\[
\mathbf{Z}_V^{(i)} = [\mathbf{z}_{i,1},\ldots,\mathbf{z}_{i,N_s}]
\in \mathbb{R}^{N_s\times d},
\quad \mathbf{z}_{i,j}\in\mathbb{R}^d.
\]
Aggregating all frames gives a visual token matrix
\[
\mathbf{Z}_V \in \mathbb{R}^{N_V\times d},
\qquad N_V = N_t N_s,
\]
which is concatenated with the textual embedding
$\mathbf{Z}_T\in\mathbb{R}^{L_T\times d}$ and fed into the LLM for
autoregressive generation.

Our method inserts a dictionary-based compression layer on
$\mathbf{Z}_V$. We learn a visual dictionary
$\mathbf{D}_V\in\mathbb{R}^{d\times K}$ and compute a sparse
representation
\[
\mathbf{Z}_V \approx (\mathbf{D}_V \mathbf{R}_V)^T,
\qquad
\mathbf{R}_V \in \mathbb{R}^{K\times N_V},
\]
where each column of $\mathbf{R}_V$ is sparse and only a small subset of atoms in $\mathbf{D}_V$ are globally activated. In an edge-cloud setting, the sparse codes can be quantized and transmitted:
\[
\widetilde{\mathbf{R}}_V = \operatorname{Quantize}(\mathbf{R}_V),
\]
together with the activated sub-dictionary $\mathbf{D}_S \subset \mathbf{D}_V$. As shown in Figure~\ref{fig:imp_vlm}, on the cloud side, visual tokens can be reconstructed by
\begin{equation}
    \mathbf{Z}_V^\ast = (\mathbf{D}_S \widetilde{\mathbf{R}}_V)^T, \nonumber
\end{equation}
and then concatenated with $\mathbf{Z}_T$ for standard VLM inference. Alternatively, a lightweight adapter $f_a$ can be trained to directly process the sparse representation:
\[
\widetilde{\mathbf{Z}}_V^\ast = f_a(\widetilde{\mathbf{R}}_V, \mathbf{D}_S),
\]
bypassing explicit reconstruction. This enables the language model to operate on compressed visual features, yielding further acceleration. In both modes, our framework reduces the visual representation size and controls the number of stored primitives through a principled sparsity-storage-accuracy tradeoff.

\subsection{Compression Setting and Evaluation Protocol}
\label{app:vlm_setup}

We evaluate PADL as a sparse visual-representation compression module. In this setting, the visual features produced by the vision encoder are projected onto a learned dictionary, and the resulting sparse codes are used as compact visual representations. The reported compression ratio refers to the reduction in visual representation size induced by dimensional compression and quantization. Specifically, for image inputs we apply $4\times$ dimensional compression together with $2\times$ quantization, resulting in an overall $8\times$ representation-size reduction. For video inputs, we apply $8\times$ dimensional compression together with $2\times$ quantization, resulting in an overall $16\times$ representation-size reduction.

For image VQA experiments, we use LLaVA-1.5~\cite{liu2024improved} as the backbone and evaluate on VQAv2~\cite{goyal2017making}, ScienceQA-IMG~\cite{lu2022learn}, TextVQA~\cite{singh2019towards}, POPE~\cite{li2023evaluating}, MME~\cite{fu2023mme}, and MMBench~\cite{liu2024mmbench}. The dictionary contains 1024 atoms. We compare the original LLaVA-1.5 model with LLaVA-PruMerge~\cite{shang2024llava}, FreePruner~\cite{xu2024freepruner}, PADL without adapter fine-tuning, and PADL with a lightweight adapter. We denote the version without adapter fine-tuning as \textbf{PADL}, and the version with adapter fine-tuning as \textbf{PADL+}. The adapter is trained on the LLaVA-1.5 instruction-tuning dataset.

For video VQA experiments, we use Video-LLaVA as the backbone and evaluate on MSVD-QA~\cite{chen2023x}, MSRVTT-QA~\cite{xu2016msr}, ActivityNet-QA~\cite{yu2019activitynet}, and TGIF-QA~\cite{jang2017tgif}. The adapter is fine-tuned on the Valley dataset~\cite{luo2023valley}. The evaluation follows the same protocol as prior video VQA work, reporting both accuracy and GPT-based answer-quality scores where applicable.

\subsection{Computational Overhead}
\label{app:vlm_cost}

Table~\ref{tab:vlm_cost_app} reports the parameter and FLOP overhead introduced by the PADL-based VLM compression module. The dictionary encoding and adapter introduce additional computation, but this overhead is substantially smaller than the downstream LLM-side computation saved by compressing visual representations. This supports the practical motivation of applying PADL to VLM inference, where the visual prefix can contribute significantly to the total computational cost. In the direct-adapter mode, the adapter maps the compressed sparse representation to a shorter visual prefix before LLM decoding, which accounts for the reported LLM-side FLOP saving.

\begin{table}[h]
\centering
\small
\setlength{\tabcolsep}{6pt}
\renewcommand{\arraystretch}{1.15}
\caption{Parameter and FLOP overhead of the PADL-based VLM compression module. The LLM-side computation saving is measured under the video-input setting.}
\label{tab:vlm_cost_app}
\begin{tabular}{lcc}
\toprule
Item & Parameters & FLOPs \\
\midrule
Dictionary encoding & 8.91M & +1.973T \\
Adapter & 38.05M & +13.0G \\
LLM-side computation saving & -- & $-25.64$T \\
\bottomrule
\end{tabular}
\end{table}

\subsection{Threshold Validation on VLM Compression}
\label{app:vlm_threshold}

To examine whether the threshold predicted by our theory remains meaningful in the VLM setting, we perform a threshold sweep on VQAv2 using PADL without adapter fine-tuning. We vary the activation threshold $\delta$ from 0.38 to 0.46 with step size 0.02. The theoretically predicted optimal threshold is $\delta=0.412$, which is close to the empirical optimum at $\delta=0.42$.

\begin{table}[h]
\centering
\small
\setlength{\tabcolsep}{8pt}
\renewcommand{\arraystretch}{1.15}
\caption{Threshold validation for PADL-based VLM compression on VQAv2 without adapter fine-tuning. The theoretically predicted threshold $\delta=0.412$ closely matches the empirical optimum around $\delta=0.42$.}
\label{tab:vlm_delta_sweep}
\begin{tabular}{c|ccccc}
\toprule
$\delta$ & 0.38 & 0.40 & 0.42 & 0.44 & 0.46 \\
\midrule
VQAv2 Accuracy & 71.5 & 74.8 & \textbf{77.0} & 76.8 & 76.7 \\
\bottomrule
\end{tabular}
\end{table}

These results suggest that the proposed data-driven threshold estimation can provide a useful operating point even in the VLM compression setting. We emphasize that this experiment is not intended to replace the controlled reconstruction experiments in the main text, but rather to show that the threshold-selection principle transfers to a larger multimodal application scenario.

\subsection{Main VLM Results}
\label{app:main_vlm_results}

The main image and video VQA results are reported in
Tables~\ref{tab:img_qa_app} and~\ref{tab:video_qa_app}. This appendix focuses
on the detailed compression setup, computational overhead, and threshold
validation.

\subsection{Discussion}
\label{app:vlm_discussion}

The VLM experiments support three observations. First, PADL can be used as a plug-in visual compression module for both image and video VLMs. Second, the predicted activation threshold provides a useful operating point in the VLM setting, as shown by the VQAv2 threshold sweep in Table~\ref{tab:vlm_delta_sweep}. Third, adapter fine-tuning can further align compressed sparse representations with the downstream language model, improving the performance of PADL+ over the no-adapter variant.

We emphasize that these experiments are not intended to claim that PADL universally dominates all VLM token-pruning or token-merging methods. Instead, they show that this SSA-based dictionary learning framework can be integrated into multimodal inference pipelines and can achieve favorable compression--performance tradeoffs. The main theoretical and empirical claims of the paper are supported by the controlled reconstruction experiments in the main text, where reconstruction accuracy, coefficient sparsity, and activated dictionary size can be directly measured.

\section{Additional Dictionary Learning Experiments}
\label{app:additional_dl_experiments}

This section reports additional dictionary-learning experiments that further support the empirical claims in the main paper. These experiments are designed to evaluate modern sparse-representation baselines, optimization stability, robustness to warm-up length and nominal dictionary size, mini-batch approximation, post-training pruning, and the effect of patch-boundary artifacts on SSIM. They are placed in the appendix because the main text focuses on the core sparsity--storage--accuracy (SSA) tradeoff and the corresponding theoretical analysis.

\subsection{Comparison with Recent Sparse-Representation Baselines}
\label{app:modern_baselines}

In addition to classical dictionary-learning and Bayesian sparse coding baselines, we compare PADL with recent sparse-representation methods, including CRsAE and JumpReLU SAE. We also report an $L_1$-regularized dictionary learning baseline to isolate the effect of the proposed row-wise activation regularization. All results are averaged over 10 repeated runs.

\begin{table*}[h]
\centering
\small
\setlength{\tabcolsep}{6pt}
\renewcommand{\arraystretch}{1.15}
\caption{Comparison with recent sparse-representation baselines on CIFAR-100 and SVHN. Results are reported as mean and standard deviation over 10 repeated runs. Lower RMSE is better; higher PSNR and SSIM are better.}
\label{tab:app_modern_baselines_std}
\begin{tabular}{llccc}
\toprule
Dataset & Method & RMSE & PSNR & SSIM \\
\midrule
\multirow{4}{*}{CIFAR-100}
& CRsAE~\cite{tolooshams2020deep} & $0.106 \pm 0.007$ & $20.128 \pm 0.561$ & $0.557 \pm 0.035$ \\
& JumpReLU SAE~\cite{rajamanoharan2024jumping} & $0.104 \pm 0.016$ & $19.584 \pm 0.971$ & $0.520 \pm 0.041$ \\
& $L_1$-Regularized DL & $0.112 \pm 0.006$ & $18.980 \pm 0.540$ & $0.545 \pm 0.030$ \\
& \textbf{PADL} & $\mathbf{0.099 \pm 0.009}$ & $\mathbf{20.560 \pm 0.554}$ & $\mathbf{0.568 \pm 0.030}$ \\
\midrule
\multirow{4}{*}{SVHN}
& CRsAE~\cite{tolooshams2020deep} & $0.092 \pm 0.006$ & $20.302 \pm 0.599$ & $0.565 \pm 0.025$ \\
& JumpReLU SAE~\cite{rajamanoharan2024jumping} & $0.091 \pm 0.010$ & $20.008 \pm 1.194$ & $0.518 \pm 0.042$ \\
& $L_1$-Regularized DL & $0.094 \pm 0.012$ & $20.465 \pm 1.031$ & $0.559 \pm 0.037$ \\
& \textbf{PADL} & $\mathbf{0.090 \pm 0.006}$ & $\mathbf{20.880 \pm 1.030}$ & $\mathbf{0.583 \pm 0.037}$ \\
\bottomrule
\end{tabular}
\end{table*}

These results show that PADL is competitive not only with classical sparse coding methods, but also with more recent sparse-representation baselines. The improvement over the $L_1$-regularized baseline suggests that the row-wise activation-aware term provides benefits beyond ordinary element-wise sparsity.

\subsection{Training Time and Activated Dictionary Size}
\label{app:training_time_atoms}

We further compare training time, activated atoms, and reconstruction error on CIFAR-100 using 2000 samples, $8\times 8$ patches, and 400 training iterations. The number of activated atoms measures the effective dictionary size after training.

\begin{table}[h]
\centering
\small
\setlength{\tabcolsep}{7pt}
\renewcommand{\arraystretch}{1.15}
\caption{Training time, activated atoms, and reconstruction error on CIFAR-100. PADL is not the fastest method, but it achieves the lowest RMSE while using the fewest activated atoms among the recent sparse-representation baselines.}
\label{tab:app_training_time_atoms}
\begin{tabular}{lccc}
\toprule
Method & Train Time & Activated Atoms & RMSE \\
\midrule
K-SVD & 661.4s & 20 & 0.152 \\
CRsAE~\cite{tolooshams2020deep} & 32.0s & 56 & 0.106 \\
JumpReLU SAE~\cite{rajamanoharan2024jumping} & 3.1s & 61 & 0.104 \\
\textbf{PADL} & 44.2s & \textbf{19} & \textbf{0.099} \\
\bottomrule
\end{tabular}
\end{table}

The results indicate that PADL remains within a practical training-time range while learning a more compact effective dictionary. Compared with CRsAE and JumpReLU SAE, PADL uses fewer activated atoms and achieves lower reconstruction error. This supports the claim that PADL improves the sparsity--storage--accuracy tradeoff rather than merely improving reconstruction accuracy.

\subsection{Effect of Patch-Boundary Artifacts on SSIM}
\label{app:ssim_patch_artifacts}

The relatively low SSIM values in the main patch-based reconstruction protocol are partly caused by patch-boundary artifacts. To verify this, we conduct an additional experiment on CIFAR-100 using overlapping reconstruction. The overlap reduces stitching artifacts between neighboring patches.

\begin{table}[h]
\centering
\small
\setlength{\tabcolsep}{7pt}
\renewcommand{\arraystretch}{1.15}
\caption{Reconstruction performance on CIFAR-100 with overlapping reconstruction. The improvement in SSIM confirms that the lower SSIM in the original patch-based protocol is partly caused by patch-boundary artifacts.}
\label{tab:app_overlap_reconstruction}
\begin{tabular}{lccc}
\toprule
Method & SSIM & RMSE & PSNR \\
\midrule
CRsAE~\cite{tolooshams2020deep} & 0.8337 & 0.1012 & 23.0179 \\
JumpReLU SAE~\cite{rajamanoharan2024jumping} & 0.7366 & 0.0959 & 23.9402 \\
\textbf{PADL} & \textbf{0.8402} & \textbf{0.0937} & \textbf{24.7478} \\
\bottomrule
\end{tabular}
\end{table}

With overlapping reconstruction, PADL achieves an SSIM of 0.8402, substantially higher than the value obtained under the non-overlapping patch protocol. This suggests that the original SSIM scores should be interpreted together with the patch-based evaluation setting, and that PADL remains competitive when patch-boundary artifacts are mitigated.

\subsection{Optimization Stability}
\label{app:optimization_stability}

Although the PADL objective is non-convex and contains a non-smooth row-wise $\ell_\infty$-type regularization term, the alternating optimization procedure is empirically stable. Table~\ref{tab:app_optimization_terms} reports the evolution of the reconstruction term, the element-wise sparsity term, and the row-wise activation term on CIFAR-100 and SVHN. Each term is averaged by sample size.

\begin{table*}[h]
\centering
\small
\setlength{\tabcolsep}{7pt}
\renewcommand{\arraystretch}{1.15}
\caption{Optimization behavior of the three objective terms on CIFAR-100 and SVHN. The objective terms generally decrease and stabilize during training, suggesting stable empirical optimization behavior.}
\label{tab:app_optimization_terms}
\begin{tabular}{ccccc|cccc}
\toprule
\multirow{2}{*}{Iter}
& \multicolumn{3}{c|}{CIFAR-100}
& \phantom{x}
& \multicolumn{3}{c}{SVHN} \\
\cmidrule(lr){2-4}
\cmidrule(lr){6-8}
& Recon & $L_1$ & $L_\infty$ &&
Recon & $L_1$ & $L_\infty$ \\
\midrule
20  & 0.774834 & 0.252 & 0.748664 && 1.344806 & 0.251 & 0.701239 \\
100 & 0.658850 & 0.186 & 0.701099 && 1.084268 & 0.224 & 0.507329 \\
200 & 0.649933 & 0.165 & 0.594202 && 0.737957 & 0.214 & 0.451594 \\
300 & 0.647558 & 0.151 & 0.518092 && 0.583525 & 0.203 & 0.447624 \\
400 & 0.647077 & 0.141 & 0.507625 && 0.545520 & 0.209 & 0.439489 \\
\bottomrule
\end{tabular}
\end{table*}

The reconstruction and regularization terms show clear descent and stabilization behavior on both datasets. On SVHN, the $L_1$ term slightly increases from iteration 300 to 400, but the overall optimization remains stable and the reconstruction term continues to decrease. These results provide empirical evidence that the non-smooth activation-aware regularization does not cause severe oscillation or divergence under our training settings.

\subsection{Robustness to Warm-Up Length}
\label{app:warmup_sensitivity}

PADL estimates prior-related parameters during a warm-up stage. To evaluate whether the method is sensitive to the exact warm-up length, we vary the number of warm-up steps $t_s$ on CIFAR-100 and report results over 10 repeated runs.

\begin{table}[h]
\centering
\small
\setlength{\tabcolsep}{5pt}
\renewcommand{\arraystretch}{1.15}
\caption{Effect of warm-up steps $t_s$ on CIFAR-100. Results are reported over 10 repeated runs.}
\label{tab:app_warmup_steps}
\begin{tabular}{c|ccccc}
\toprule
$t_s$ & 10 & 30 & 50 & 70 & 100 \\
\midrule
RMSE
& $0.1074 \pm 0.0127$
& $0.1015 \pm 0.0115$
& $0.0994 \pm 0.0093$
& $0.0992 \pm 0.0101$
& $0.0989 \pm 0.0088$ \\
\bottomrule
\end{tabular}
\end{table}

When $t_s$ increases from 10 to 100, RMSE improves from 0.1074 to 0.0989. More importantly, once the warm-up length reaches a moderate range, performance becomes stable: the difference between $t_s=50$, $t_s=70$, and $t_s=100$ is small. This indicates that PADL is not highly sensitive to the exact warm-up length as long as a reasonable warm-up stage is used.

\subsection{Robustness to Nominal Dictionary Size}
\label{app:dictionary_size_sensitivity}

We next vary the nominal dictionary size $K$ on CIFAR-100. Since PADL is designed to control the effective activated dictionary size, the number of activated atoms should grow much more slowly than the nominal dictionary size.

\begin{table}[h]
\centering
\small
\setlength{\tabcolsep}{5pt}
\renewcommand{\arraystretch}{1.15}
\caption{Effect of nominal dictionary size $K$ on CIFAR-100. Results are reported over 10 repeated runs. Although $K$ increases by $16\times$, the number of activated atoms grows only moderately.}
\label{tab:app_dictionary_size}
\begin{tabular}{c|ccccc}
\toprule
$K$ & 32 & 64 & 128 & 256 & 512 \\
\midrule
RMSE
& $0.1013 \pm 0.0119$
& $0.0997 \pm 0.0113$
& $0.0994 \pm 0.0093$
& $0.1002 \pm 0.0102$
& $0.1030 \pm 0.0129$ \\
Activated Atoms
& 13 & 14 & 19 & 21 & 24 \\
Activated Ratio
& 41\% & 22\% & 15\% & 8\% & 5\% \\
\bottomrule
\end{tabular}
\end{table}

Although the nominal dictionary size increases from 32 to 512, the number of activated atoms only increases from 13 to 24. This demonstrates that PADL does not simply activate proportionally more atoms as the initial dictionary becomes larger. Instead, it maintains a relatively compact effective active subset, which is consistent with the intended role of the row-wise activation regularization.

\subsection{Mini-Batch Approximation}
\label{app:minibatch_fullbatch}

The row-wise activation term introduces cross-sample dependence at the latent-code level. For large-scale datasets, full-batch optimization may be impractical, so we use mini-batch approximation in large-scale settings. Table~\ref{tab:app_minibatch_fullbatch} compares full-batch and mini-batch optimization on CIFAR-100.

\begin{table}[h]
\centering
\small
\setlength{\tabcolsep}{7pt}
\renewcommand{\arraystretch}{1.15}
\caption{Comparison between full-batch and mini-batch optimization on CIFAR-100. Mini-batch optimization closely matches the full-batch result.}
\label{tab:app_minibatch_fullbatch}
\begin{tabular}{lccc}
\toprule
Setting & RMSE & PSNR & SSIM \\
\midrule
Full-batch, $N=2000$ & 0.099 & 20.56 & 0.568 \\
Mini-batch, $B=256$ & 0.101 & 20.54 & 0.567 \\
\bottomrule
\end{tabular}
\end{table}

The difference between full-batch and mini-batch optimization is small under this setting. This supports the use of mini-batch approximation when scaling PADL to larger datasets or higher-dimensional visual features.

\subsection{Post-Training Dictionary Pruning}
\label{app:post_training_pruning}

PADL does not dynamically shrink the nominal dictionary size during training. Instead, it encourages the data to concentrate usage on a compact subset of atoms, which makes post-training pruning possible. To evaluate this effect, we prune low-utilization atoms after training and measure reconstruction performance on CIFAR-100.

\begin{table}[h]
\centering
\small
\setlength{\tabcolsep}{5pt}
\renewcommand{\arraystretch}{1.15}
\caption{Post-training pruning on CIFAR-100. PADL supports substantial atom pruning with limited degradation, indicating that it learns a compact effective dictionary.}
\label{tab:app_pruning}
\begin{tabular}{c|ccccc}
\toprule
Pruning Ratio & 30\% & 40\% & 50\% & 60\% & 70\% \\
\midrule
RMSE & 0.098 & 0.100 & 0.101 & 0.101 & 0.102 \\
PSNR & 20.05 & 20.03 & 19.98 & 19.91 & 19.81 \\
SSIM & 0.561 & 0.557 & 0.551 & 0.544 & 0.536 \\
\bottomrule
\end{tabular}
\end{table}

Even when 60\% of atoms are pruned, RMSE only changes to 0.101. This suggests that many low-utilization atoms are not essential for reconstruction and that PADL concentrates representation usage onto a compact active subset. Therefore, the main efficiency benefit of PADL should be understood as effective dictionary compression and deployment-time pruning, rather than training-time reduction of the nominal dictionary size.

\subsection{Empirical Reasonableness of the Beta Surrogate}
\label{app:beta_fit}

The probabilistic derivation of PADL uses a Beta-inspired surrogate prior over coefficients. We do not interpret optimized coefficients as literal i.i.d. Beta samples. Instead, the Beta prior is used as a tractable surrogate that induces an interpretable objective and enables data-driven hyperparameter estimation. To assess the empirical reasonableness of this surrogate, we fit a Beta distribution to the warm-up coefficients on CIFAR-100.

\begin{table}[h]
\centering
\small
\setlength{\tabcolsep}{8pt}
\renewcommand{\arraystretch}{1.15}
\caption{Empirical Beta fit of warm-up coefficients on CIFAR-100. The fitted distribution is close to the assumed $\mathrm{Beta}(1,\beta)$ family.}
\label{tab:app_beta_fit}
\begin{tabular}{cccc}
\toprule
$\hat{\alpha}$ & $\hat{\beta}$ & KS Statistic & Q-Q RMSE \\
\midrule
0.956 & 2.31 & 0.0115 & 0.005366 \\
\bottomrule
\end{tabular}
\end{table}

The fitted value $\hat{\alpha}=0.956$ is close to 1, and the discrepancy measures are small. This supports the use of the $\mathrm{Beta}(1,\beta)$ family as a practical approximation for deriving a tractable MAP surrogate and estimating hyperparameters from warm-up coefficients.

\subsection{Warm-Up Cost Compared with Grid Search}
\label{app:warmup_cost}

The warm-up stage is used for one-shot data-driven parameter estimation. It is not a repeated hyperparameter search procedure. To illustrate the difference, we compare the cost of our warm-up-based estimation with repeated parameter search on 2000 CIFAR-100 samples.

\begin{table}[h]
\centering
\small
\setlength{\tabcolsep}{8pt}
\renewcommand{\arraystretch}{1.15}
\caption{Cost comparison between warm-up-based parameter estimation and repeated parameter search on 2000 CIFAR-100 samples.}
\label{tab:app_warmup_grid_search}
\begin{tabular}{lc}
\toprule
Procedure & Time Cost \\
\midrule
PADL warm-up + training & 52.5651s \\
10 rounds of parameter search & 395.2022s \\
Estimated $10\times10\times10$ grid search & 10.98h \\
\bottomrule
\end{tabular}
\end{table}

These results show that warm-up estimation is substantially cheaper than repeated parameter search. This supports our characterization of warm-up as a one-shot parameter-estimation step rather than a bi-level or grid-search procedure.

\subsection{Additional Baseline: Group-Sparse Regularization}
\label{app:l21_baseline}

The row-wise activation regularization in PADL is related to structured sparsity, but it is not equivalent to a standard group-sparse penalty. To examine this distinction, we compare with an $\ell_{2,1}$-regularized baseline on CIFAR-100.

\begin{table}[h]
\centering
\small
\setlength{\tabcolsep}{8pt}
\renewcommand{\arraystretch}{1.15}
\caption{Comparison with an $\ell_{2,1}$-regularized baseline on CIFAR-100. The $\ell_{2,1}$ baseline uses 23\% of atoms but gives worse reconstruction than PADL.}
\label{tab:app_l21_baseline}
\begin{tabular}{lcc}
\toprule
Method & Activated Atom Ratio & RMSE \\
\midrule
$\ell_{2,1}$-Regularized Baseline & 23\% & 0.115 \\
\textbf{PADL} & 15\% & \textbf{0.099} \\
\bottomrule
\end{tabular}
\end{table}

This result suggests that PADL is not merely replacing the element-wise sparsity penalty with a generic group-sparse regularizer. The row-wise activation-aware term is designed to control the effective activated dictionary size across samples, which leads to a better reconstruction--storage tradeoff under this setting.

\subsection{Summary}
\label{app:additional_dl_summary}

The additional experiments support the main claims from several perspectives. First, PADL remains competitive against recent sparse-representation baselines while using fewer activated atoms. Second, the optimization process is empirically stable despite the non-convex and non-smooth objective. Third, the method is robust to moderate changes in warm-up length and nominal dictionary size. Fourth, mini-batch approximation closely matches full-batch optimization in the controlled setting. Fifth, post-training pruning confirms that PADL concentrates usage on a compact effective dictionary. Finally, the Beta-fit experiment and warm-up cost comparison support the practical role of the probabilistic surrogate and the one-shot parameter-estimation procedure.

\section{Limitations}

This work focuses on revisiting PADL through a probabilistic lens and deriving practical calibration rules, rather than exhaustively covering all dictionary-learning variants. The analysis uses a Beta-inspired surrogate prior and an active-span condition to obtain tractable characterizations of atom suppression, activated dictionary size, and reconstruction error. These assumptions are used to guide data-driven hyperparameter estimation, and our empirical results suggest that the resulting estimates provide useful operating points. The current implementation adopts a one-shot warm-up stage for efficiency; iterative parameter re-estimation may further refine the calibration. The VLM experiments serve as application-level validation for visual-representation compression, and broader evaluation across additional backbones and deployment settings is left for future work.

\section{Broader Impacts}

This work primarily contributes to the academic study of dictionary learning, sparse representation, and structured compression. By revisiting PADL through a probabilistic perspective, the paper provides a bridge between optimization-based dictionary-learning objectives and generative modeling interpretations. We expect this connection to be useful for future research on theoretically grounded sparse coding, interpretable representation learning, and data-driven hyperparameter calibration.

A potential positive impact of this work is that it encourages more systematic analysis of the tradeoff among sparsity, storage, and accuracy. This tradeoff is increasingly important as machine learning models are deployed in settings where memory, communication, and computational resources are constrained. The proposed analysis may help researchers design more compact representations without relying solely on exhaustive hyperparameter search, thereby improving the accessibility and efficiency of experimental research.

The VLM experiments further suggest that storage-aware dictionary learning may be a useful tool for studying visual-representation compression in multimodal systems. In this sense, the work may support future academic research on efficient VLM inference, edge-cloud collaboration, and compositional visual representations. At the same time, the proposed method is a general representation-compression technique and does not by itself address the broader risks of downstream multimodal models, such as hallucination, bias, privacy leakage, or misuse. When integrated into deployed VLM systems, it may inherit the risks and limitations of those systems. We therefore view the VLM results as application-level validation of the sparsity--storage--accuracy principle rather than as a standalone deployment-ready solution.

\newpage
\IfFileExists{checklist.tex}{%
  \section*{NeurIPS Paper Checklist}



\begin{enumerate}

\item {\bf Claims}
    \item[] Question: Do the main claims made in the abstract and introduction accurately reflect the paper's contributions and scope?
    \item[] Answer: \answerYes{} 
    \item[] Justification: The abstract and introduction state the scope of the proposed PADL framework, including its probabilistic motivation, activation-aware regularization, theoretical analysis, and empirical validation. The claims are aligned with the theoretical results and experimental evidence presented in the main text and appendix.
    \item[] Guidelines: In the introduction we highlighted the theoretical contributions with experimental validations.
    \begin{itemize}
        \item The answer \answerNA{} means that the abstract and introduction do not include the claims made in the paper.
        \item The abstract and/or introduction should clearly state the claims made, including the contributions made in the paper and important assumptions and limitations. A \answerNo{} or \answerNA{} answer to this question will not be perceived well by the reviewers. 
        \item The claims made should match theoretical and experimental results, and reflect how much the results can be expected to generalize to other settings. 
        \item It is fine to include aspirational goals as motivation as long as it is clear that these goals are not attained by the paper. 
    \end{itemize}

\item {\bf Limitations}
    \item[] Question: Does the paper discuss the limitations of the work performed by the authors?
    \item[] Answer: \answerYes{} 
    \item[] Justification: The paper discusses limitations related to computational cost, parameter sensitivity, storage--accuracy trade-offs, and the experimental scope. Additional runtime, parameter, and ablation results are provided in the experiments and appendix.
    \item[] Guidelines:
    \begin{itemize}
        \item The answer \answerNA{} means that the paper has no limitation while the answer \answerNo{} means that the paper has limitations, but those are not discussed in the paper. 
        \item The authors are encouraged to create a separate ``Limitations'' section in their paper.
        \item The paper should point out any strong assumptions and how robust the results are to violations of these assumptions (e.g., independence assumptions, noiseless settings, model well-specification, asymptotic approximations only holding locally). The authors should reflect on how these assumptions might be violated in practice and what the implications would be.
        \item The authors should reflect on the scope of the claims made, e.g., if the approach was only tested on a few datasets or with a few runs. In general, empirical results often depend on implicit assumptions, which should be articulated.
        \item The authors should reflect on the factors that influence the performance of the approach. For example, a facial recognition algorithm may perform poorly when image resolution is low or images are taken in low lighting. Or a speech-to-text system might not be used reliably to provide closed captions for online lectures because it fails to handle technical jargon.
        \item The authors should discuss the computational efficiency of the proposed algorithms and how they scale with dataset size.
        \item If applicable, the authors should discuss possible limitations of their approach to address problems of privacy and fairness.
        \item While the authors might fear that complete honesty about limitations might be used by reviewers as grounds for rejection, a worse outcome might be that reviewers discover limitations that aren't acknowledged in the paper. The authors should use their best judgment and recognize that individual actions in favor of transparency play an important role in developing norms that preserve the integrity of the community. Reviewers will be specifically instructed to not penalize honesty concerning limitations.
    \end{itemize}

\item {\bf Theory assumptions and proofs}
    \item[] Question: For each theoretical result, does the paper provide the full set of assumptions and a complete (and correct) proof?
    \item[] Answer: \answerYes{} 
    \item[] Justification: The main theoretical statements explicitly state the required assumptions, including the probabilistic model, activation thresholding rule, and restricted active-span condition. Complete derivations and proofs are provided in the appendix, with the main text presenting the key statements and intuition.
    \item[] Guidelines:
    \begin{itemize}
        \item The answer \answerNA{} means that the paper does not include theoretical results. 
        \item All the theorems, formulas, and proofs in the paper should be numbered and cross-referenced.
        \item All assumptions should be clearly stated or referenced in the statement of any theorems.
        \item The proofs can either appear in the main paper or the supplemental material, but if they appear in the supplemental material, the authors are encouraged to provide a short proof sketch to provide intuition. 
        \item Inversely, any informal proof provided in the core of the paper should be complemented by formal proofs provided in appendix or supplemental material.
        \item Theorems and Lemmas that the proof relies upon should be properly referenced. 
    \end{itemize}

    \item {\bf Experimental result reproducibility}
    \item[] Question: Does the paper fully disclose all the information needed to reproduce the main experimental results of the paper to the extent that it affects the main claims and/or conclusions of the paper (regardless of whether the code and data are provided or not)?
    \item[] Answer: \answerYes{} 
    \item[] Justification: The paper reports the datasets, preprocessing procedures, model configurations, dictionary sizes, optimization settings, hyperparameter calibration procedure, and baseline configurations needed to reproduce the main experimental results. Additional details are included in the appendix.
    \item[] Guidelines:
    \begin{itemize}
        \item The answer \answerNA{} means that the paper does not include experiments.
        \item If the paper includes experiments, a \answerNo{} answer to this question will not be perceived well by the reviewers: Making the paper reproducible is important, regardless of whether the code and data are provided or not.
        \item If the contribution is a dataset and\slash or model, the authors should describe the steps taken to make their results reproducible or verifiable. 
        \item Depending on the contribution, reproducibility can be accomplished in various ways. For example, if the contribution is a novel architecture, describing the architecture fully might suffice, or if the contribution is a specific model and empirical evaluation, it may be necessary to either make it possible for others to replicate the model with the same dataset, or provide access to the model. In general. releasing code and data is often one good way to accomplish this, but reproducibility can also be provided via detailed instructions for how to replicate the results, access to a hosted model (e.g., in the case of a large language model), releasing of a model checkpoint, or other means that are appropriate to the research performed.
        \item While NeurIPS does not require releasing code, the conference does require all submissions to provide some reasonable avenue for reproducibility, which may depend on the nature of the contribution. For example
        \begin{enumerate}
            \item If the contribution is primarily a new algorithm, the paper should make it clear how to reproduce that algorithm.
            \item If the contribution is primarily a new model architecture, the paper should describe the architecture clearly and fully.
            \item If the contribution is a new model (e.g., a large language model), then there should either be a way to access this model for reproducing the results or a way to reproduce the model (e.g., with an open-source dataset or instructions for how to construct the dataset).
            \item We recognize that reproducibility may be tricky in some cases, in which case authors are welcome to describe the particular way they provide for reproducibility. In the case of closed-source models, it may be that access to the model is limited in some way (e.g., to registered users), but it should be possible for other researchers to have some path to reproducing or verifying the results.
        \end{enumerate}
    \end{itemize}

\item {\bf Open access to data and code}
    \item[] Question: Does the paper provide open access to the data and code, with sufficient instructions to faithfully reproduce the main experimental results, as described in supplemental material?
    \item[] Answer: \answerNo{} 
    \item[] Justification: The current submission does not include a public code release. Nevertheless, the paper provides algorithmic details, dataset descriptions, preprocessing procedures, and experimental settings to support independent reimplementation; an anonymized code release is planned.
    \item[] Guidelines:
    \begin{itemize}
        \item The answer \answerNA{} means that paper does not include experiments requiring code.
        \item Please see the NeurIPS code and data submission guidelines (\url{https://neurips.cc/public/guides/CodeSubmissionPolicy}) for more details.
        \item While we encourage the release of code and data, we understand that this might not be possible, so \answerNo{} is an acceptable answer. Papers cannot be rejected simply for not including code, unless this is central to the contribution (e.g., for a new open-source benchmark).
        \item The instructions should contain the exact command and environment needed to run to reproduce the results. See the NeurIPS code and data submission guidelines (\url{https://neurips.cc/public/guides/CodeSubmissionPolicy}) for more details.
        \item The authors should provide instructions on data access and preparation, including how to access the raw data, preprocessed data, intermediate data, and generated data, etc.
        \item The authors should provide scripts to reproduce all experimental results for the new proposed method and baselines. If only a subset of experiments are reproducible, they should state which ones are omitted from the script and why.
        \item At submission time, to preserve anonymity, the authors should release anonymized versions (if applicable).
        \item Providing as much information as possible in supplemental material (appended to the paper) is recommended, but including URLs to data and code is permitted.
    \end{itemize}

\item {\bf Experimental setting/details}
    \item[] Question: Does the paper specify all the training and test details (e.g., data splits, hyperparameters, how they were chosen, type of optimizer) necessary to understand the results?
    \item[] Answer: \answerYes{} 
    \item[] Justification: The experimental section and appendix specify the data splits, preprocessing, dictionary sizes, training schedules, hyperparameters, hyperparameter selection procedure, optimizer settings, and baseline implementations used in the reported experiments.
    \item[] Guidelines:
    \begin{itemize}
        \item The answer \answerNA{} means that the paper does not include experiments.
        \item The experimental setting should be presented in the core of the paper to a level of detail that is necessary to appreciate the results and make sense of them.
        \item The full details can be provided either with the code, in appendix, or as supplemental material.
    \end{itemize}

\item {\bf Experiment statistical significance}
    \item[] Question: Does the paper report error bars suitably and correctly defined or other appropriate information about the statistical significance of the experiments?
    \item[] Answer: \answerYes{} 
    \item[] Justification: The appendix reports standard deviations for repeated experimental runs where applicable. These error estimates quantify variability across repeated runs under the same experimental settings.
    \item[] Guidelines:
    \begin{itemize}
        \item The answer \answerNA{} means that the paper does not include experiments.
        \item The authors should answer \answerYes{} if the results are accompanied by error bars, confidence intervals, or statistical significance tests, at least for the experiments that support the main claims of the paper.
        \item The factors of variability that the error bars are capturing should be clearly stated (for example, train/test split, initialization, random drawing of some parameter, or overall run with given experimental conditions).
        \item The method for calculating the error bars should be explained (closed form formula, call to a library function, bootstrap, etc.)
        \item The assumptions made should be given (e.g., Normally distributed errors).
        \item It should be clear whether the error bar is the standard deviation or the standard error of the mean.
        \item It is OK to report 1-sigma error bars, but one should state it. The authors should preferably report a 2-sigma error bar than state that they have a 96\% CI, if the hypothesis of Normality of errors is not verified.
        \item For asymmetric distributions, the authors should be careful not to show in tables or figures symmetric error bars that would yield results that are out of range (e.g., negative error rates).
        \item If error bars are reported in tables or plots, the authors should explain in the text how they were calculated and reference the corresponding figures or tables in the text.
    \end{itemize}

\item {\bf Experiments compute resources}
    \item[] Question: For each experiment, does the paper provide sufficient information on the computer resources (type of compute workers, memory, time of execution) needed to reproduce the experiments?
    \item[] Answer: \answerYes{} 
    \item[] Justification: The paper reports the computational resources used for the experiments, including the relevant hardware setting and runtime or overhead measurements. Additional compute-related details are provided in the appendix.
    \item[] Guidelines:
    \begin{itemize}
        \item The answer \answerNA{} means that the paper does not include experiments.
        \item The paper should indicate the type of compute workers CPU or GPU, internal cluster, or cloud provider, including relevant memory and storage.
        \item The paper should provide the amount of compute required for each of the individual experimental runs as well as estimate the total compute. 
        \item The paper should disclose whether the full research project required more compute than the experiments reported in the paper (e.g., preliminary or failed experiments that didn't make it into the paper). 
    \end{itemize}
    
\item {\bf Code of ethics}
    \item[] Question: Does the research conducted in the paper conform, in every respect, with the NeurIPS Code of Ethics \url{https://neurips.cc/public/EthicsGuidelines}?
    \item[] Answer: \answerYes{} 
    \item[] Justification: The research follows the NeurIPS Code of Ethics. The work uses standard public datasets and pretrained models, does not involve human-subject studies, and is reported in an anonymized form for submission.
    \item[] Guidelines:
    \begin{itemize}
        \item The answer \answerNA{} means that the authors have not reviewed the NeurIPS Code of Ethics.
        \item If the authors answer \answerNo, they should explain the special circumstances that require a deviation from the Code of Ethics.
        \item The authors should make sure to preserve anonymity (e.g., if there is a special consideration due to laws or regulations in their jurisdiction).
    \end{itemize}

\item {\bf Broader impacts}
    \item[] Question: Does the paper discuss both potential positive societal impacts and negative societal impacts of the work performed?
    \item[] Answer: \answerNA{} 
    \item[] Justification: This is a fundamental research and not tied to particular applications.
    \item[] Guidelines:
    \begin{itemize}
        \item The answer \answerNA{} means that there is no societal impact of the work performed.
        \item If the authors answer \answerNA{} or \answerNo, they should explain why their work has no societal impact or why the paper does not address societal impact.
        \item Examples of negative societal impacts include potential malicious or unintended uses (e.g., disinformation, generating fake profiles, surveillance), fairness considerations (e.g., deployment of technologies that could make decisions that unfairly impact specific groups), privacy considerations, and security considerations.
        \item The conference expects that many papers will be foundational research and not tied to particular applications, let alone deployments. However, if there is a direct path to any negative applications, the authors should point it out. For example, it is legitimate to point out that an improvement in the quality of generative models could be used to generate Deepfakes for disinformation. On the other hand, it is not needed to point out that a generic algorithm for optimizing neural networks could enable people to train models that generate Deepfakes faster.
        \item The authors should consider possible harms that could arise when the technology is being used as intended and functioning correctly, harms that could arise when the technology is being used as intended but gives incorrect results, and harms following from (intentional or unintentional) misuse of the technology.
        \item If there are negative societal impacts, the authors could also discuss possible mitigation strategies (e.g., gated release of models, providing defenses in addition to attacks, mechanisms for monitoring misuse, mechanisms to monitor how a system learns from feedback over time, improving the efficiency and accessibility of ML).
    \end{itemize}
    
\item {\bf Safeguards}
    \item[] Question: Does the paper describe safeguards that have been put in place for responsible release of data or models that have a high risk for misuse (e.g., pre-trained language models, image generators, or scraped datasets)?
    \item[] Answer: \answerNA{} 
    \item[] Justification: The paper poses no such risks.
    \item[] Guidelines:
    \begin{itemize}
        \item The answer \answerNA{} means that the paper poses no such risks.
        \item Released models that have a high risk for misuse or dual-use should be released with necessary safeguards to allow for controlled use of the model, for example by requiring that users adhere to usage guidelines or restrictions to access the model or implementing safety filters. 
        \item Datasets that have been scraped from the Internet could pose safety risks. The authors should describe how they avoided releasing unsafe images.
        \item We recognize that providing effective safeguards is challenging, and many papers do not require this, but we encourage authors to take this into account and make a best faith effort.
    \end{itemize}

\item {\bf Licenses for existing assets}
    \item[] Question: Are the creators or original owners of assets (e.g., code, data, models), used in the paper, properly credited and are the license and terms of use explicitly mentioned and properly respected?
    \item[] Answer: \answerYes{} 
    \item[] Justification: The paper cites the public datasets, pretrained models, and baseline methods used in the experiments, and respects their stated licenses and terms of use. No existing assets are redistributed as part of this submission.
    \item[] Guidelines:
    \begin{itemize}
        \item The answer \answerNA{} means that the paper does not use existing assets.
        \item The authors should cite the original paper that produced the code package or dataset.
        \item The authors should state which version of the asset is used and, if possible, include a URL.
        \item The name of the license (e.g., CC-BY 4.0) should be included for each asset.
        \item For scraped data from a particular source (e.g., website), the copyright and terms of service of that source should be provided.
        \item If assets are released, the license, copyright information, and terms of use in the package should be provided. For popular datasets, \url{paperswithcode.com/datasets} has curated licenses for some datasets. Their licensing guide can help determine the license of a dataset.
        \item For existing datasets that are re-packaged, both the original license and the license of the derived asset (if it has changed) should be provided.
        \item If this information is not available online, the authors are encouraged to reach out to the asset's creators.
    \end{itemize}

\item {\bf New assets}
    \item[] Question: Are new assets introduced in the paper well documented and is the documentation provided alongside the assets?
    \item[] Answer: \answerNA{} 
    \item[] Justification: The paper does not release new assets.
    \item[] Guidelines:
    \begin{itemize}
        \item The answer \answerNA{} means that the paper does not release new assets.
        \item Researchers should communicate the details of the dataset\slash code\slash model as part of their submissions via structured templates. This includes details about training, license, limitations, etc. 
        \item The paper should discuss whether and how consent was obtained from people whose asset is used.
        \item At submission time, remember to anonymize your assets (if applicable). You can either create an anonymized URL or include an anonymized zip file.
    \end{itemize}

\item {\bf Crowdsourcing and research with human subjects}
    \item[] Question: For crowdsourcing experiments and research with human subjects, does the paper include the full text of instructions given to participants and screenshots, if applicable, as well as details about compensation (if any)? 
    \item[] Answer: \answerNA{} 
    \item[] Justification: The paper does not involve crowdsourcing nor research with human subjects.
    \item[] Guidelines:
    \begin{itemize}
        \item The answer \answerNA{} means that the paper does not involve crowdsourcing nor research with human subjects.
        \item Including this information in the supplemental material is fine, but if the main contribution of the paper involves human subjects, then as much detail as possible should be included in the main paper. 
        \item According to the NeurIPS Code of Ethics, workers involved in data collection, curation, or other labor should be paid at least the minimum wage in the country of the data collector. 
    \end{itemize}

\item {\bf Institutional review board (IRB) approvals or equivalent for research with human subjects}
    \item[] Question: Does the paper describe potential risks incurred by study participants, whether such risks were disclosed to the subjects, and whether Institutional Review Board (IRB) approvals (or an equivalent approval/review based on the requirements of your country or institution) were obtained?
    \item[] Answer: \answerNA{} 
    \item[] Justification: The paper does not involve crowdsourcing nor research with human subjects.
    \item[] Guidelines:
    \begin{itemize}
        \item The answer \answerNA{} means that the paper does not involve crowdsourcing nor research with human subjects.
        \item Depending on the country in which research is conducted, IRB approval (or equivalent) may be required for any human subjects research. If you obtained IRB approval, you should clearly state this in the paper. 
        \item We recognize that the procedures for this may vary significantly between institutions and locations, and we expect authors to adhere to the NeurIPS Code of Ethics and the guidelines for their institution. 
        \item For initial submissions, do not include any information that would break anonymity (if applicable), such as the institution conducting the review.
    \end{itemize}

\item {\bf Declaration of LLM usage}
    \item[] Question: Does the paper describe the usage of LLMs if it is an important, original, or non-standard component of the core methods in this research? Note that if the LLM is used only for writing, editing, or formatting purposes and does \emph{not} impact the core methodology, scientific rigor, or originality of the research, declaration is not required.
    \item[] Answer: \answerNA{} 
    \item[] Justification: The core method development in this research does not involve LLMs as any important, original, or non-standard components.
    \item[] Guidelines:
    \begin{itemize}
        \item The answer \answerNA{} means that the core method development in this research does not involve LLMs as any important, original, or non-standard components.
        \item Please refer to our LLM policy in the NeurIPS handbook for what should or should not be described.
    \end{itemize}

\end{enumerate}%
}{%
  \typeout{WARNING: checklist.tex not found. Add the official NeurIPS checklist before submission.}%
}

\end{document}